\journal{Image and Vision Computing-IMAVIS-D-14-00231}
\providecommand{\mtHomogeneous}[1]{\tilde{#1}}
\providecommand{\mtEstimate}[1]{\breve{#1}}
\providecommand{\mtError}[1]{\bar{#1}}
\providecommand{\mtNormalize}[1]{\hat{#1}}
\providecommand{\mtVec}[1]{\boldsymbol{\mathbf{#1}}}
\providecommand{\mtHVec}[1]{\mtHomogeneous{\mtVec{#1}}}
\providecommand{\mtUnitVec}[1]{\mtNormalize{\mtVec{#1}}}
\providecommand{\mtEstVec}[1]{\mtEstimate{\mtVec{#1}}}
\providecommand{\mtErrVec}[1]{\mtError{\mtVec{#1}}}
\providecommand{\mtMat}[1]{\boldsymbol{\mathbf{#1}}}
\providecommand{\mtVar}[1]{#1}
\providecommand{\mtHVar}[1]{\mtHomogeneous{#1}}
\providecommand{\mtBy}[2]{ {#1} \times {#2} }
\providecommand{\mtPoint}[1]{\mtVec{p}^{#1}}
\providecommand{\mtHPoint}[1]{\mtHVec{p}^{#1}}
\providecommand{\mtRotation}[0]{\mtMat{\mathcal{R}}}
\providecommand{\mtR}[1]{\mtRotation_{#1}}
\providecommand{\mtRt}[1]{\mtR{#1}^{\top}}
\providecommand{\mtTranslation}[0]{\mtVec{t}}
\providecommand{\mtt}[1]{\mtTranslation_{#1}}
\providecommand{\mtTransformation}[0]{\mtMat{T}}
\providecommand{\mtT}[2]{\mtTransformation_{#1}^{#2}}
\providecommand{\mtTK}[0]{\mtTransformation_{\mtFrame{K}}}
\providecommand{\mtRK}[0]{\mtRotation_{\mtFrame{K}}}
\providecommand{\mtRKt}[0]{\mtRotation_{\mtFrame{K}}^{\top}}
\providecommand{\mttK}[0]{\mtTranslation_{\mtFrame{K}}}
\providecommand{\mtProj}[1]{\mtMat{\kappa}_{#1}}
\providecommand{\mtIdentity}[1]{\mtMat{I}_{\mtBy{#1}{#1}}}
\providecommand{\mtZeros}[2]{\mtMat{0}_{\mtBy{#1}{#2}}}
\providecommand{\mtSO}[1]{{SO({#1})}}
\providecommand{\mtSE}[1]{{SE({#1})}}
\providecommand{\mtFrame}[1]{#1}
\providecommand{\mtCam}[1]{\mtFrame{C}_{#1}}
\providecommand{\mtKey}[1]{\mtFrame{K}_{#1}}
\providecommand{\mtCiKj}[2]{ \mtCam{#1} \mtKey{#2} }
\providecommand{\mtCK}[2]{ {#1},{#2} } 
\providecommand{\mtImage}[1]{\mtFrame{I}_{#1}}
\providecommand{\mtTarget}[0]{\mtFrame{M}}
\providecommand{\mtTc}[1]{\mtTransformation_{\mtCam{#1}}}
\providecommand{\mtTk}[1]{\mtTransformation_{\mtKey{#1}}}
\providecommand{\mtRk}[1]{\mtR{\mtKey{#1}}}
\providecommand{\mtRkt}[1]{\mtRt{\mtKey{#1}}}
\providecommand{\mtRc}[1]{\mtR{\mtCam{#1}}}
\providecommand{\mtRct}[1]{\mtRt{\mtCam{#1}}}
\providecommand{\mttk}[1]{\mtt{\mtKey{#1}}}
\providecommand{\mttc}[1]{\mtt{\mtCam{#1}}}
\providecommand{\mtSkewSym}[1]{\left[{#1}\right]_\times}
\providecommand{\mtTranspose}[1]{{{#1}^{\top}}}
\providecommand{\mtInverse}[1]{{#1}^{-1}}
\providecommand{\mtDet}[1]{\text{det} \left( #1 \right)}
\providecommand{\mtRank}[1]{\text{rank} \left( #1 \right)}
\providecommand{\mtReal}[1]{\mathbb{R}^{#1}}
\providecommand{\mtNatural}[0]{\mathbb{N}^{+}}
\providecommand{\mtProjective}[1]{\mathbb{P}^{#1}}
\providecommand{\mtPartial}[2]{ \dfrac{\partial {#1}}{\partial {#2}} }
\providecommand{\mtEvaluated}[2]{ \left. {#1} \right|_{#2} }
\providecommand{\mtCross}[2]{ {#1} \times {#2} }
\providecommand{\mtDot}[2]{ {#1} \cdot {#2} }
\providecommand{\mtEtAl}[1]{#1 \emph{et al}}
\providecommand{\mtIth}[1]{{#1}^{\text{th}}}
\providecommand{\mtArgMin}[1]{\underset{#1}{\text{arg\,min}}\ }
\providecommand{\mtJacobian}[0]{\mtMat{J}}
\providecommand{\mtJobs}[3]{\mtJacobian_{#1}^{\mtCK{#2}{#3}}}
\newtheorem{mtTheorem}{Theorem}[section]
\newtheorem{mtLemma}[mtTheorem]{Lemma}
\newtheorem{mtCorollary}[mtTheorem]{Corollary}
\theoremstyle{definition}
\newtheorem{mtAssumption}{Assumption}[section]
\begin{document}

\begin{frontmatter}



\title{Degenerate Motions in Multicamera Cluster SLAM with Non-overlapping Fields of View}


\author[uwmme]{Michael J.~Tribou\corref{cor1}}
\ead{mjtribou@uwaterloo.ca}

\author[uwece]{David W.~L.~Wang}
\ead{dwang@uwaterloo.ca}

\author[uwmme]{Steven L.~Waslander}
\ead{stevenw@uwaterloo.ca}

\cortext[cor1]{Corresponding author at: University of Waterloo, E3X-4118 -- 200 University Avenue West, Waterloo, ON, Canada, N2L 3G1. Telephone: +1-519-635-8971}

\address[uwmme]{Department of Mechanical and Mechatronics Engineering, University of Waterloo, 200 University Avenue West, Waterloo, ON, Canada, N2L 3G1.}
\address[uwece]{Department of Electrical and Computer Engineering, University of Waterloo, 200 University Avenue West, Waterloo, ON, Canada, N2L 3G1.}

\begin{abstract}
An analysis of the relative motion and point feature model configurations leading to solution degeneracy is presented, for the case of a Simultaneous Localization and Mapping system using multicamera clusters with non-overlapping fields-of-view. The SLAM optimization system seeks to minimize image space reprojection error and is formulated for a cluster containing any number of component cameras, observing any number of point features over two keyframes. The measurement Jacobian is transformed to expose a reduced-dimension representation such that the degeneracy of the system can be determined by the rank of a dense submatrix. A set of relative motions sufficient for degeneracy are identified for certain cluster configurations, independent of target model geometry. Furthermore, it is shown that increasing the number of cameras within the cluster and observing features across different cameras over the two keyframes reduces the size of the degenerate motion sets significantly.
\end{abstract}

\begin{keyword}
SLAM \sep
Computer vision \sep
Multicamera cluster \sep
Non-overlapping FOV \sep
Degeneracy analysis \sep
Critical motions
\end{keyword}

\end{frontmatter}


\section{Introduction}
\label{section:introduction}
Precise robotic motion and manipulation tasks with respect to unknown target environments and objects require an accurate, real-time measurement of the relative position and orientation of the robot and target. Multicamera systems are often employed for robotic pose and target model estimation, as each camera is an inexpensive, light-weight, and passive device capable of collecting a large amount of environment information at high rates. Many researchers across different fields have investigated the use of cameras for the purpose of estimating motion and scene structure. As a result, many techniques using a variety of camera types and configurations have been detailed in the literature.

A camera cluster is composed of any number of simple perspective cameras mounted rigidly with respect to each other, as shown in Figure \ref{fig:cluster}, including configurations in which their fields-of-view (FOV) are spatially disjoint \cite{Baker:2001:semcmbmw}. This arrangement makes effective use of the camera sensors to cover a large combined FOV with high resolution, and in general, is able to overcome the limitations of other camera configurations, such as scale and translation-rotation motion ambiguities \cite{Pless:2003:Umcao}. Additionally, by arranging the cameras to look in many directions, the pose estimation is made more robust since when certain cameras do not see any point features suitable for tracking, the other cameras in the cluster can maintain the localization. In this scenario, camera arrangements with a smaller collective FOV may become lost causing the tracking operation to fail.

\begin{figure}[!ht]
\centering
\includegraphics[width=0.48\textwidth]{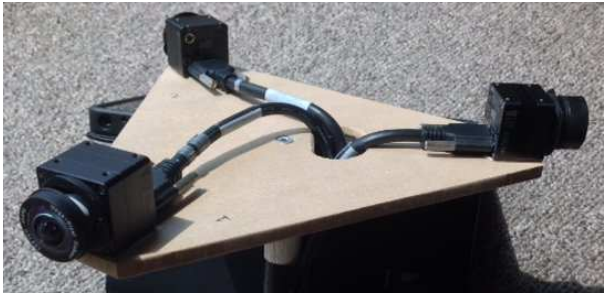}
\caption{An example camera cluster in which the three component cameras are rigidly-fixed with respect to each other.}
\label{fig:cluster}
\end{figure}

In order for any pose estimation system to operate successfully, the current state must be uniquely recoverable given the measurable outputs up to, and including the current time step. In the context of a multicamera cluster relative pose system, this means that the image measurements must contain sufficient information to recover the cluster motion and the target model parameters, including the proper global scale metric. Furthermore, the solution must be unique since convergence to a different configuration, which may also agree with the measurements, would likely result in failure of perception and control operations.

When the multicamera cluster is configured such that there is little or no spatial FOV overlap between the component cameras, the sensitivity of the image measurements to the global scale of the reconstructed model is low, particularly around specific motion profiles known as critical motions \cite{Clipp:2008:RDMENMS}. When the relative motion of the cluster is at or near critical, the global scale of the solution is extremely difficult, if not impossible, to recover accurately. In the presence of  measurement noise, the solution will converge to an incorrect scale value.

This work investigates the degenerate configurations when estimating the Simultaneous Localization and Mapping (SLAM) \cite{Thrun:2005:PR} system states for a calibrated multicamera cluster over two keyframes while observing a set of point features in each camera and using an iterative optimization or recursive filter-based approach, minimizing the image space reprojection error of point feature measurements. This includes Bundle Adjustment (BA) \cite{Hartley:2003:Mvgcv} schemes as well as recursive filters such as an extended Kalman filter \cite{Davison:2007:MRSCS}. The main contribution is the identification of configurations of motion and target model structure leading to non-unique SLAM solutions. 

Determining the system configurations leading to solution degeneracy is closely related to the concept of observability in control systems. In the study of observability for nonlinear systems, the local weak observability of the system can be determined by calculating the observability rank condition about any point in the state space \cite{Hermann:1977:Nco}. This involves checking the column rank of a matrix containing the partial derivatives with respect to the system states, for increasing orders of Lie derivatives of the measurement model with respect the the system dynamics. When the matrix has full column rank, the system is locally weakly observable about that point. 

For a SLAM system using only the visual measurements from the cluster cameras and a non-stationary target, the system does not have a model of the dynamics for the relative motion and therefore, only the zeroth-order Lie derivatives are non-zero. In this case, evaluating the observability rank condition is equivalent to checking the rank of the measurement Jacobian matrix, as will be done here in the degeneracy analysis in Section \ref{section:degeneracies}. If the system were to contain a model of the relative motion dynamics, and the extra information that comes with it, the higher-order Lie derivatives of the measurement model would contain non-zero terms and the added matrix rows would only increase the likelihood that the matrix has full column rank at any point in the state space. However, in this analysis, no such assumptions about the relative motion dynamics are made and the degenerate configurations arising from only using image measurements for a set of point features over two keyframes are identified.

The remainder of this paper is arranged as follows: Section \ref{section:background} contains a review of the previous analyses for degenerate configurations of the multicamera cluster relative pose system; Section \ref{section:multicamera} presents the multicamera cluster SLAM system; the degenerate configurations of the pose estimation system are identified in Section \ref{section:degeneracies}; and finally, conclusions are drawn in Section \ref{section:conclusions}.

\section{Related Work}
\label{section:background}

Previous analyses identifying cluster motions leading to degenerate system solutions have assumed that the five degrees of freedom describing relative orientation and translation direction of the cluster are known using the well-studied single camera ego-motion estimation techniques (e.g.~\cite{Hartley:2003:Mvgcv}). These include the work of \mtEtAl{Kim}.~\cite{Kim:2010:Remsmtc}, and \mtEtAl{Clipp}.~\cite{Clipp:2008:RDMENMS} for camera clusters with two component cameras, as well as that of the authors \cite{Tribou:2013:srmcsnfov} for clusters with three component cameras. Of interest are the conditions when the image measurements from the camera cluster are able to allow for estimation of the final degree of freedom, corresponding to the translation magnitude and therefore, global system scale. The analyses show that when each point feature is seen by only one of the two cameras at both keyframes, the global scale of the solution solution is recoverable only when the relative translational and rotational motion are both non-zero, and does not result in the optical centres of each camera moving in concentric arcs on circles with a common centre at the intersection of the baselines at each keyframe \cite{Clipp:2008:RDMENMS}. When a third non-collinear camera is added to the cluster, the set of degenerate motions is reduced to those which result in all the three cameras moving in parallel \cite{Tribou:2013:srmcsnfov}.

Analyses of degeneracies of the full SLAM solution for multicamera clusters have focused on those associated with solving the generalized camera relative pose problem, either linearly using the Generalized Essential Matrix (GEM) \cite{Pless:2003:Umcao}, or aligning imaging rays in space for minimal cases of camera poses and points \cite{Stewenius:2005:StMGRPP}. Sturm \cite{Sturm:2005:MGGCM}, \mtEtAl{Stewenius}.~\cite{Stewenius:2005:StMGRPP}, and \mtEtAl{Mouragnon}.~\cite{Mouragnon:2009:Grsmulba} discuss some degenerate cases, but Kim and Kanade \cite{Kim:2010:DotLSPAfGE} provide the most complete analysis. They identify the following degenerate configurations for generalized cameras using the seventeen point method \cite{Pless:2003:Umcao}:
\begin{enumerate}
\item All of the observation rays pass through one common point before and after the camera motion.
\item The camera centres are on a line before and after the motion.
\item Each corresponding ray pair passes through the same local point in the general camera frame before and after the motion.
\end{enumerate}

For a camera cluster with non-overlapping FOV, it is possible that each component camera observes its own mutually exclusive set of feature points over the two keyframes. In this case, the system satisfies condition 3 and the solution to the seventeen point algorithm is always degenerate. However, it is known from previous results that in certain configurations, other solution methods are able to recover an accurate estimate of the motion and structure. Consequently, the seventeen point algorithm does not always recover a solution when one exists. This problem was noticed by \mtEtAl{Li}.~\cite{Li:2008:latmeugcm}, who have since modified the algorithm for use with non-overlapping clusters, but the subsequent degeneracy analysis has not been carried out. More importantly, the degenerate configurations are specific to the linear method of estimation. In this work, the minimization of image-space reprojection error is considered and the configurations for which an optimization of this type will fail are identified in the subsequent analysis.


\section{Multicamera Cluster Pose Estimation}
\label{section:multicamera}

\subsection{Projective Geometry}
The projective space $\mtProjective{n}$ (refer to \ref{section:projective} for a brief introduction) provides a convenient way of representing the camera measurement system in terms of homogeneous transformations and points \cite{Hartley:2003:Mvgcv}. It will sometimes be necessary to move between the respective real and projective space representation of points, and the following promotion and demotion operators are defined. The projective promotion operator $\mtHVec{\rho} : \mtReal{n} \rightarrow \mtProjective{n}$ maps a point $\mtVec{x}$ in the real vector space to its representation in the projective space,
\begin{align}
\mtHVec{\rho} \left( \mtVec{x} \right) &= \mtTranspose{ \begin{bmatrix} \mtTranspose{ \mtVec{x} } & 1 \end{bmatrix} }.  
\end{align}
The projective demotion operator $\mtVec{\pi}_n : \mtProjective{n} \rightarrow \mtReal{n}$ maps a point $\mtHVec{x}$ in the projective space back to the corresponding point $\mtVec{x}$ the real vector space,
\begin{align}
\mtVec{x} &= \mtVec{\pi}_n \left( \mtHVec{x} \right) \\
&= \begin{cases}
\text{undefined} & \mbox{if } x_{n+1} = 0 \\
\mtTranspose{ \begin{bmatrix} \dfrac{\mtHVar{x}_1}{\mtHVar{x}_{n+1}} & \dfrac{\mtHVar{x}_2}{\mtHVar{x}_{n+1}} & \dots & \dfrac{\mtHVar{x}_n}{\mtHVar{x}_{n+1}} \end{bmatrix} } & \mbox{if } x_{n+1} \neq 0.
\end{cases}
\end{align}
Note that the result of this operator is undefined for points at infinity.

In this work, unless it is ambiguous from the context, the promotion and demotion operators will be implied by the vector notation. The homogeneous coordinates for a given vector $\mtVec{x} \in \mtReal{n}$ will simply be written as $\mtHVec{x} \in \mtProjective{n}$, but implicitly, $\mtHVec{x} \equiv \mtHVec{\rho} \left( \mtVec{x} \right)$, and likewise, $\mtVec{x} \equiv \mtVec{\pi}_n \left( \mtHVec{x} \right)$ assuming $\mtHVar{x}_{n+1} \neq 0$.

\subsection{Pin-hole Camera Model}
\label{section-pinholecamera}
An individual component camera within the cluster is modelled as a simple pin-hole imaging device, which maps 3D points onto a 2D plane called the image plane \cite{Lu:2004:AsmDra}. An example is shown in Figure \ref{camera_model}. A 3D point $\mtHPoint{\mtCam{i}} = \mtTranspose{ \begin{bmatrix} \mtVar{x}^{\mtCam{i}} & \mtVar{y}^{\mtCam{i}} & \mtVar{z}^{\mtCam{i}} & 1 \end{bmatrix} }$, represented in the projective space $\mtProjective{3}$, and expressed with respect to the $i^{\text{{th}}}$ camera coordinate frame, $\mtCam{i}$, is projected onto the image plane $\mtImage{i}$. The intersection of the point feature ray $\mtHPoint{\mtCam{i}}$, through the optical centre, $\mtVec{o}_{i}$, with the image plane occurs at the point, $\mtTranspose{ \begin{bmatrix}
u &
v \end{bmatrix} } \in \mtReal{2}$. It is assumed that each camera has been intrinsically calibrated using one of the many existing offline techniques \cite{Hartley:2003:Mvgcv}, such that the measurements are made to match the structure shown.

\begin{figure*}[ht]
\centering
\includegraphics[width=0.8\textwidth]{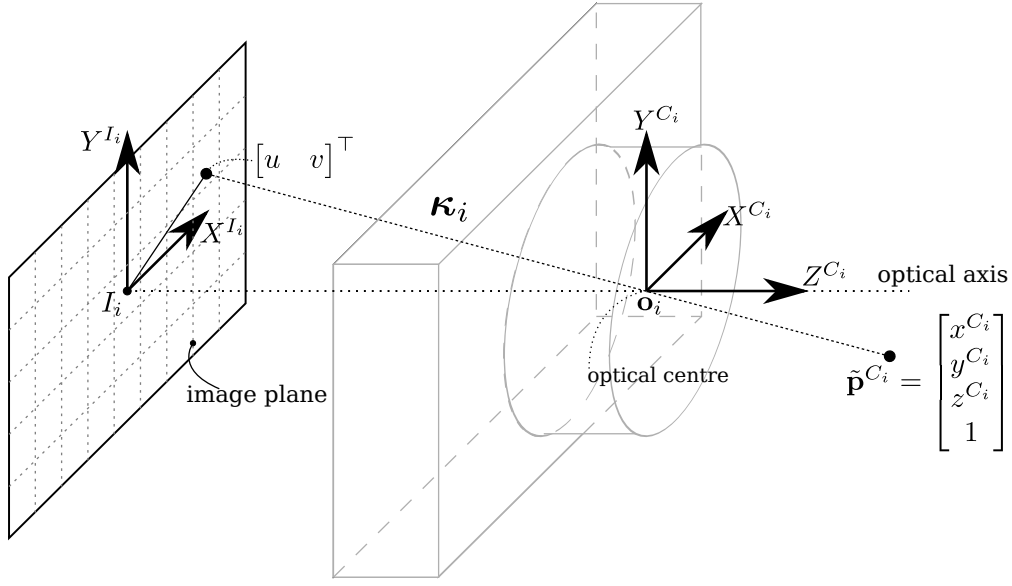}
\caption[Pin-hole Camera Model]{A simple pin-hole camera measurement model is used to relate the camera frame coordinates to the camera image plane coordinates for a feature point.}
\label{camera_model}
\end{figure*}

The camera projection matrix, $\mtProj{i}$, maps the point in $\mtProjective{3}$ into $\mtProjective{2}$ on the image plane. It is assumed in this work, without loss of generality, that the projection matrices for all of the cameras have the form,
\begin{align}
\mtProj{i} = \begin{bmatrix} -1 & 0 & 0 & 0 \\ 0 & -1 & 0 & 0 \\ 0 & 0 & 1 & 0 \end{bmatrix}.
\end{align}
The point $\mtHPoint{\mtCam{i}}$ is projected into $\mtProjective{2}$ on the image plane,
\begin{align}
\mtHPoint{\mtImage{i}} &= \mtProj{i} \mtHPoint{\mtCam{i}},
\end{align}
then subsequently mapped to the actual image plane coordinates in $\mtReal{2}$ through the demotion function $\mtVec{\pi}_2$ for $\mtProjective{2}$,
\begin{equation}
\label{eq:2dprojection}
\mtVec{\pi}_2(\mtHPoint{\mtImage{i}}) = 
\begin{bmatrix}
u \\
v
\end{bmatrix}
=
\begin{bmatrix}
\dfrac{-\mtVar{x}^{\mtCam{i}}}{\mtVar{z}^{\mtCam{i}}} \\[10pt]
\dfrac{-\mtVar{y}^{\mtCam{i}}}{\mtVar{z}^{\mtCam{i}}} \end{bmatrix}, \quad \mtVar{z}^{\mtCam{i}} \neq 0.
\end{equation}

Each camera is assumed to have an FOV strictly less than 180 degrees and therefore, is only able to observe points in front of the lens so every point is constrained to have a positive z-axis coordinate,
\begin{align}
{\mtVar{z}^{\mtCam{i}}} > 0,
\end{align}
which satisfies \eqref{eq:2dprojection}.

\subsection{Calibrated Multicamera Cluster}
Collectively, the calibrated camera cluster is modelled as a set of $n_c$ component pin-hole cameras with known relative coordinate transformations between each camera coordinate frame. Accordingly, a point $\mtHPoint{\mtCam{h}}$ in the camera frame $\mtCam{h}$, can be transformed into any other camera frame $\mtCam{i}$ by,
\begin{equation}
\mtHPoint{\mtCam{i}} = \mtT{\mtCam{h}}{\mtCam{i}} \mtHPoint{\mtCam{h}}
\end{equation}
where $\mtT{\mtCam{h}}{\mtCam{i}} \in \mtSE{3}$, $\forall i,h \in \{ 1,\dots,n_c \}$, is a homogeneous transformation matrix in $\mtSE{3}$ \cite{Murray:1994:amitrm}. Without loss of generality, the coordinate frame for the camera cluster is chosen to coincide with the first camera frame, $\mtCam{1}$. The transformation from camera $h$ to the cluster frame can be written in shortened form as $\mtTc{h} \equiv \mtT{\mtCam{h}}{\mtCam{1}}$, where the cluster frame $\mtFrame{C}_1$ is implied when the superscript is neglected. The transformation is shown in Figure \ref{fig:camera-calib}.

\begin{figure}[ht]
\centering
\includegraphics[width=0.35\textwidth]{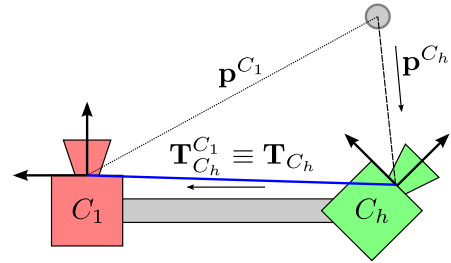}
\caption[Camera Cluster Calibration]{The relative position and orientation of each camera is known relative to the cluster frame, $\mtCam{1}$ and therefore, the position of points in any camera frame can be found with respect to the cluster frame using the known transformation, $\mtTc{h}$.}
\label{fig:camera-calib}
\end{figure}

\subsection{Point Feature Target Object Model}
The tracked target object or environment, henceforth referred to simply as the target, is a rigid body which contains a set of visible point features. A point feature is a visually distinguishable point on the tracked physical target that corresponds to a unique 3D position in a local target coordinate frame $\mtTarget$, and is measurable in a set of camera images through a relative motion sequence. Image measurements of these point features are extracted from the images using image processing techniques, including feature extraction algorithms like the FAST corner detector \cite{Rosten:2005:Fplhpt,Rosten:2006:Mlhscd}, the Scale-Invariant Feature Transform (SIFT) \cite{Lowe:1999:ORLSF}, or Speeded-Up Robust Features (SURF) \cite{Bay:2006:SSurf}. 

%
%

The target model point features are organized into $n_k$ keyframes, each a six degree of freedom pose with respect to the target model reference frame $\mtTarget$, along with the $n_c$ images from the cluster cameras captured at that location, as in \cite{Klein:2007:PTMSAW} for a single camera. The coordinate frame of camera $h$ at keyframe $k$ is denoted $\mtCiKj{h}{k}$. 

Since the relative position and orientation of each camera within the cluster is fixed at all times, the $\mtIth{k}$ keyframe pose is parameterized by the single homogeneous transformation for the cluster coordinate frame at the keyframe, $\mtCiKj{1}{k}$, with respect to the target model reference frame, $\mtTarget$, resulting in $\mtT{\mtCiKj{1}{k}}{\mtTarget} \in \mtSE{3}$. The $\mtCam{1}$ and $\mtTarget$ frames are applied universally in this keyframe pose definition, and therefore, the transformation will be written simply as $\mtTk{k} \equiv \mtT{\mtCiKj{1}{k}}{\mtTarget}$. The pose of camera $h$ at keyframe $k$ is easily found as,
\begin{align}
\mtT{\mtCiKj{h}{k}}{\mtTarget} = \mtTk{k} \mtTc{h}.
\end{align}

The position of the $j^{\text{th}}$ point feature is parameterized by the azimuth and altitude angles of the vector from the origin of the anchor camera coordinate frame through the feature, $\mtVec{\mu}_j = \mtTranspose{[\phi_j,\theta_j]}$ where $\phi_j,\theta_j \in \left(-\frac{\pi}{2},\frac{\pi}{2}\right)$. The depth along this bearing to the point feature, is the value $s_j \in \mtReal{+}$. The bearing angles are used to form the unit vector in the camera coordinate frame at the first keyframe,
\begin{align}
\label{eq:observability:pointanchor}
\mtUnitVec{p}_j^{\mtCK{h}{1}} &= \begin{bmatrix}
\sin \phi_j \cos \theta_j \\
- \sin \theta_j \\
\cos \phi_j \cos \theta_j \end{bmatrix},
\end{align}
and the point feature position is along this bearing at the distance $s_j$,
\begin{align}
\label{eq:8}
\mtVec{p}_j^{\mtCK{h}{1}} &= s_j \mtUnitVec{p}_j^{\mtCK{h}{1}}.
\end{align}


An example system with a camera cluster composed of $n_c = 2$ cameras is shown in Figure \ref{fig:system-frames}. The cameras in this example are arranged back-to-back with the optical axes looking outwards along the green axes of the associated coordinate frames. The $\mtIth{j}$ point feature is anchored in the second camera at the first keyframe, $\mtCiKj{2}{1}$, and its position with respect to this coordinate frame is represented as $\mtPoint{\mtCK{2}{1}}_j$.

\begin{figure}[ht]
\centering
\includegraphics[width=0.48\textwidth]{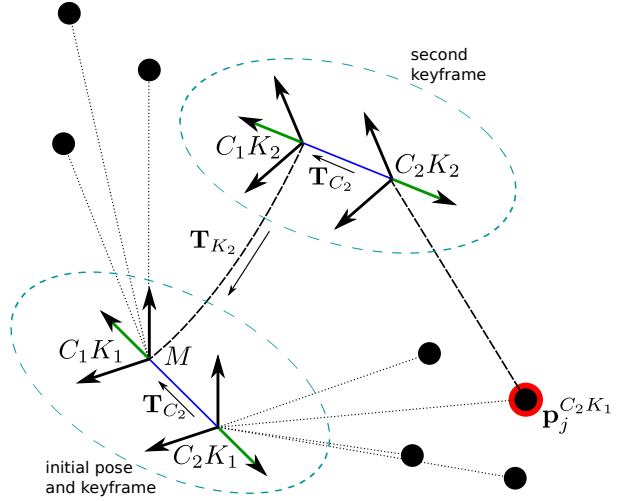}
\caption[Target Model Parameterization]{An example target object model with two keyframes for a two-camera back-to-back cluster. The cameras look outwards with the green arrows showing the optical axes. The point feature $j$ is anchored, and therefore, positioned within the $\mtCiKj{2}{1}$ coordinate frame. The relative pose of camera 2, $\mtTc{2}$, is known from calibration, but the relative pose of keyframe 2, $\mtTk{2}$, as well as the position of the point features must be estimated.}
\label{fig:system-frames}
\end{figure}

The parameters representing the poses of the keyframes, together with the positions of the point features observed within them, compose the target model, as well as the full system state. These parameters are estimated using the point feature image measurements within the cluster cameras.

\subsection{Multicamera Cluster SLAM System}
\label{subsection:slamsystem}

This work considers the motion and structure estimation for a cluster of $n_c$ cameras observing a set of $n_f$ point features over two keyframes. The cameras within the cluster are arranged with little or no overlap in their FOV where each point feature in the target model is visible in only one camera at the first keyframe. Without loss of generality, the target model frame is chosen to coincide with the pose of the first keyframe $\mtTarget \equiv \mtCiKj{1}{1}$. This results in the keyframe transformation becoming the identity,
\begin{align}
\mtTk{1} = \mtIdentity{4}.
\end{align}

Several further assumptions about the system are made to facilitate the analysis in the subsequent sections:
\begin{mtAssumption}
\label{assum:1}
Each point feature is observed and measured by only one of the component cameras at the first keyframe. The position of the point feature is expressed with respect to the coordinate frame for that camera at the first keyframe. The coordinate frame in which the point feature is parameterized is referred to as the anchor keyframe and camera coordinate frame.
\end{mtAssumption}
\begin{mtAssumption}
\label{assum:2}
Each point feature is observed and measured by one or more of the component cameras at the second keyframe. At least one of the observations is by a camera for which its motion is not collinear with the initial bearing to the point feature at the first keyframe. The camera and keyframe in which the observation occurs is called the observing keyframe and camera coordinate frame.
\end{mtAssumption}
\begin{mtAssumption}
\label{assum:3}
The point feature positions and keyframe poses are arranged such that if a camera observes a point feature, the feature position expressed in the observing camera coordinate frame has a finite positive non-zero z-axis component, $0<z<\infty$.
\end{mtAssumption}

For Assumption \ref{assum:1}, the function $h : \{1,\dots,n_f\} \rightarrow \{1,\dots,n_c\}$ maps the point feature index to the anchor camera index. As a result, the anchor camera for the $\mtIth{j}$ point feature is camera $h(j)$. In the following, when it is obvious from the context, the anchor camera index will be written in the shortened-form by dropping the argument, $h \equiv h(j)$.

Similarly for Assumption \ref{assum:2}, the $\mtIth{j}$ point feature is observed and measured by $n_o(j) \in \mtNatural$ cameras at the second keyframe. The indices of the observing cameras are found using the function $i: \{1,\dots,n_f\} \times \{1,\dots,n_o(j)\} \rightarrow \{1,\dots,n_c\}$, such that the $\mtIth{k}$ observation of the $\mtIth{j}$ point feature at the second keyframe is measured by camera $i(j,k)$. Once again, this will be shortened to exclude the feature index $j$ and observation index $k$ when it is implied by the context, $i \equiv i(k) \equiv i(j,k)$.

The motion of the camera cluster between the two keyframes is parameterized by six values describing the relative translation and orientation of the first keyframe with respect to the second keyframe. The translation parameters, $t_x$, $t_y$, $t_z$, form the relative translation vector, $\mttK = \mtTranspose{[t_x,t_y,t_z]}$, and the rotation parameters, $\mtVec{\omega}_{\mtFrame{K}} = \mtTranspose{[\omega_x,\omega_y,\omega_z]}$, form the relative rotation matrix, $\mtRK \in \mtSO{3}$. Together, the rotation and translation form the transformation, $\mtTK \in \mtSE{3}$.

The resulting state vector, $\mtVec{x} \in \mtReal{n}$, where $n = 6+3n_f$, is composed of the parameters for the $n_f$ point features, along with the relative translation and orientation states for the cluster motion between the keyframes,
\begin{align}
\mtVec{x} = \begin{bmatrix} 
\mtVec{x}_1 \\
\mtVec{x}_2 \\
\mtVec{x}_3 \\
\end{bmatrix},
\end{align}
where
\begin{align}
\mtVec{x}_1 = \mtTranspose{[s_1,\dots,s_{n_f}]} \in \mtReal{n_f}_{+},
\end{align}
are the radial distances to the point features,
\begin{align}
\mtVec{x}_2 = \mtTranspose{[ \mtTranspose{\mttK}, \mtTranspose{\mtVec{\omega}_K} ]} \in \mtReal{6},
\end{align}
are the relative position and orientation of the first keyframe with respect to the second keyframe and,
\begin{align}
\mtVec{x}_3 = \mtTranspose{[ \mtTranspose{\mtVec{\mu}_1}, \mtTranspose{\mtVec{\mu}_2},\dots,\mtTranspose{\mtVec{\mu}_{n_f}} ]} \in \mtReal{2n_f},
\end{align}
are the bearings to the point features in their respective anchor camera coordinate frames. This state order has been specifically chosen in order to facilitate the analysis of the degeneracies of the solution presented in Section \ref{section:degeneracies}.


\subsection{Camera Cluster Measurement Model}
The system measurement vector, $\mtVec{z} \in \mtReal{m}$, is formed by stacking the image plane coordinates of all of the point feature observations in all cameras at both keyframes, where $m = 2( n_f+m_o)$ with
\begin{align}
m_o = \sum\limits^{n_f}_{j=1}  n_o(j),
\end{align}
as the total number of observations of all of the point features at the second keyframe. 

The measurement model, relating the observed point feature locations in the camera image planes, to the system states, can be written as a series of coordinate transformations. Suppose that the $\mtIth{j}$ point feature, anchored in the coordinate frame $\mtCiKj{h}{1}$, is measured by camera $i$ at $\mtCiKj{i}{2}$. An example of this chain of transformations is shown for the simple back-to-back two-camera cluster system in Figure \ref{fig:system-frames}. In this particular case, the point feature $j$ is anchored in $\mtCiKj{2}{1}$ and observed in $\mtCiKj{2}{2}$.

The point feature position parameters give the location of the $j^{\text{th}}$ feature in its anchor keyframe and camera frame $\mtCiKj{h}{1}$, resulting in $\mtVec{p}^{\mtCK{h}{1}}_j$. This point feature is first transformed into the target model coordinate frame by,
\begin{align}
\mtHPoint{\mtTarget}_j &= \mtTk{1} \mtTc{h} \mtHVec{p}^{\mtCK{h}{1}}_j\\
&= \mtTc{h} \mtHVec{p}^{\mtCK{h}{1}}_j,
\end{align}
which is the transformation provided by the known cluster calibration.

The point feature position, with position estimate expressed in the target model reference frame, is transformed into the coordinate frame of the observing keyframe and camera $\mtCiKj{i}{\ell}$ using the relative keyframe pose transformation, $\mtTk{\ell}$, and the cluster calibration,
\begin{align}
\mtHPoint{\mtCK{i}{\ell}}_j &= \mtTranspose{ \begin{bmatrix} x^{\mtCK{i}{\ell}}_j & y^{\mtCK{i}{\ell}}_j & z^{\mtCK{i}{\ell}}_j & 1 \end{bmatrix} } \\
&= \mtInverse{(\mtTc{i})} \mtInverse{(\mtTk{\ell})} \mtHPoint{\mtTarget}_j \\
\label{eq:3-16}
&= \mtInverse{(\mtTc{i})} \mtInverse{(\mtTk{\ell})} \mtTc{h} \mtHVec{p}^{\mtCK{h}{1}}_j.
\end{align}

Finally, the point is projected into $\mtProjective{2}$ and onto the image plane of camera $\mtCam{i}$ using the corresponding projection matrix, $\mtProj{i}$,
\begin{align}
\mtHVec{u}^{\mtCK{i}{\ell}}_j &= \mtTranspose{\begin{bmatrix} u_x & u_y & u_z \end{bmatrix}} \\
&= \mtProj{i} \mtHPoint{\mtCK{i}{\ell}}_j \\
&= \begin{bmatrix}
-x^{\mtCK{i}{\ell}}_j \\[4pt]
-y^{\mtCK{i}{\ell}}_j \\[4pt]
z^{\mtCK{i}{\ell}}_j \end{bmatrix},
\end{align}
which leads to the resulting measurement vector $\mtVec{z}^{\mtCK{i}{\ell}}_j \in \mtReal{2}$ and mapping $\mtVec{g}^{\mtCK{i}{\ell}}_j : \mtReal{n} \rightarrow \mtReal{2}$ for the observation of point feature $j$ in camera $i$ at keyframe $\ell$,
\begin{align}
\label{eq:25}
\mtVec{z}^{\mtCK{i}{\ell}}_j = \mtVec{g}^{\mtCK{i}{\ell}}_j(\mtVec{x}) 
&= \mtVec{\pi}_2 \left( \mtHVec{u}^{\mtCK{i}{\ell}}_j \right).
\end{align}

Each of the intermediate transformations in \eqref{eq:3-16} can be represented by a rotation matrix and translation vector,
\begin{align}
\label{eq:22}
\mtTc{h} &= \begin{bmatrix} \mtRc{h} & \mttc{h} \\
\mtZeros{1}{3} & 1 \end{bmatrix} \\
\mtTk{\ell} &= \begin{bmatrix} \mtRk{\ell} & \mttk{\ell} \\
\mtZeros{1}{3} & 1 \end{bmatrix} =  
\begin{cases}
\mtIdentity{4}, & \ell = 1 \\
\begin{bmatrix} \mtRKt & -\mtRKt \mttK \\
\mtZeros{1}{3} & 1 \end{bmatrix}, & \ell = 2 \end{cases} \\
\label{eq:24}
\mtTc{i} &= \begin{bmatrix} \mtRc{i} & \mttc{i} \\
\mtZeros{1}{3} & 1 \end{bmatrix}.
\end{align}
When \eqref{eq:22}--\eqref{eq:24} are substituted into \eqref{eq:3-16} along with \eqref{eq:8}, the coordinates of the point feature position in $\mtReal{3}$ become,
\begin{align}
\label{eq:observability:simpletrans2}
\mtPoint{\mtCK{i}{\ell}}_j &= 
s_j \mtRct{i} \mtRkt{\ell} \mtRc{h} \mtUnitVec{p}^{\mtCK{h}{1}}_j 
- \mtRct{i} \mtRkt{\ell} \mttk{\ell} 
+ \mtRct{i} \mtRkt{\ell} \mttc{h} 
- \mtRct{i} \mttc{i}
\\
&= \mtRct{i} \left( 
s_j \mtRkt{\ell} \mtRc{h} \mtUnitVec{p}^{\mtCK{h}{1}}_j 
- \mtRkt{\ell} \mttk{\ell} 
+ \mtRkt{\ell} \mttc{h} 
- \mttc{i} 
\right) \\
& = \mtRct{i} \mtVec{q}^{\mtCK{i}{\ell}}_j,
\end{align}
where
\begin{align}
\mtVec{q}^{\mtCK{i}{\ell}}_j &= s_j \mtUnitVec{a}_{j,\ell} + \mtVec{b}_{\ell} + \mtVec{c}_{h,\ell} + \mtVec{d}_{i}
\end{align}
with
\begin{align}
\mtUnitVec{a}_{j,\ell} &= \mtRkt{\ell} \mtRc{h} \mtUnitVec{p}^{\mtCK{h}{1}}_j \\
\mtVec{b}_{\ell} &= -\mtRkt{\ell} \mttk{\ell} \\
\mtVec{c}_{h,\ell} &=  \mtRkt{\ell} \mttc{h} \\
\mtVec{d}_{i} &= -\mttc{i}
\end{align}
and
\begin{align}
\mtRc{i} = \begin{bmatrix} \mtUnitVec{n}_{i,x} & \mtUnitVec{n}_{i,y} & \mtUnitVec{n}_{i,z} 
\end{bmatrix} \in \mtSO{3},
\end{align}
where $\mtUnitVec{n}_{i,x}$, $\mtUnitVec{n}_{i,y}$, and $\mtUnitVec{n}_{i,z}$ are the orthonormal basis vectors for the observing camera $i$ frame with respect to the camera 1 coordinate frame. An example system consisting of the cameras observing point features over two keyframes is shown in Figure~\ref{fig:obs-setup} with the intermediate variables labelled.

\begin{figure*}[!ht]
\centering
\includegraphics[width=0.8\textwidth]{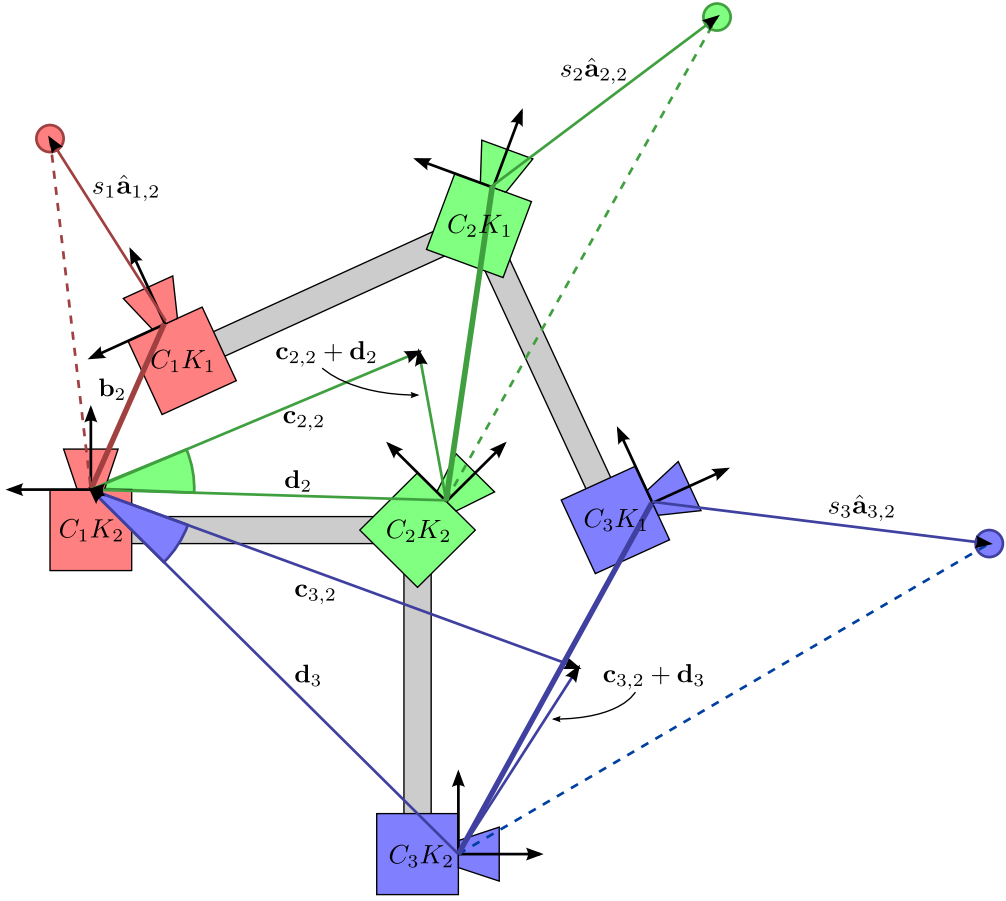}
\caption{An example three-camera cluster observing point features over two keyframes with intermediate vectors labelled. Vectors are parameterized with reference to keyframe $\mtCiKj{1}{2}$.}
\label{fig:obs-setup}
\end{figure*}

The set of camera observation vectors for point feature $j$ is defined as the displacements between the anchor camera coordinate frame at the first keyframe, $\mtCiKj{h}{1}$, and the centres of each of the observing cameras at the second keyframe, $\mtCiKj{i(k)}{2}$, $\forall k \in \{1,\dots,n_o(j)\}$. The set of vectors are,
\begin{align}
V = \{ \mtVec{v}_{\alpha,\beta} \in \mtReal{3} | \alpha,\beta \in \mtNatural, \alpha \leq n_f \text{ and } \beta \leq n_o(\alpha) \}.
\end{align}
Therefore, if it is included in the set $V$, the camera observation vector can be written as,
\begin{align}
\mtVec{v}_{\alpha,\beta}  = \mtVec{b}_{2} + \mtVec{c}_{h(\alpha),2} + \mtVec{d}_{i(\beta)}.
\end{align}
and is illustrated in the example system shown in Figure~\ref{fig:cross-cam}.

\begin{figure}[!ht]
\centering
\includegraphics[width=0.4\textwidth]{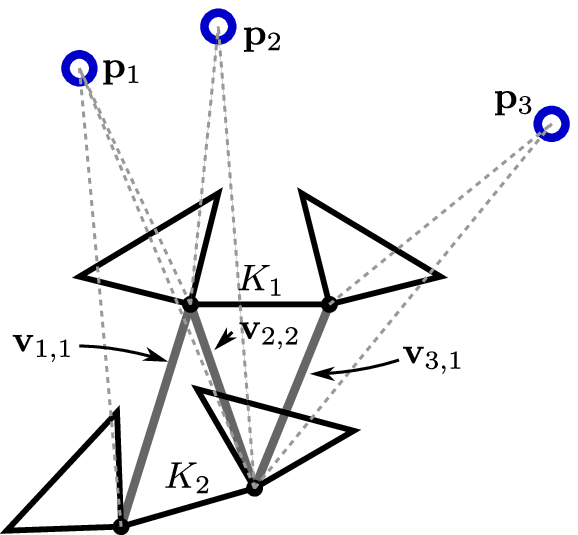}
\caption{An example two-camera cluster observing three features. Each feature is observed in one camera at the first keyframe, but some are seen by different cameras at the second keyframe. The camera observation vectors, $\mtVec{v}_{1,1}$, $\mtVec{v}_{2,2}$, $\mtVec{v}_{3,1}$ link the cameras which see the same feature at the different keyframes.}
\label{fig:cross-cam}
\end{figure}

The image coordinates of each individual point feature observation measurements are then compiled into a vector for point feature $j$ containing all of the individual observations of that feature at both keyframes,
\begin{align}
\mtVec{z}_j &= 
\begin{bmatrix}
\mtVec{z}^{\mtCK{i(1)}{2}}_j \\
\vdots \\
\mtVec{z}^{\mtCK{i(n_o)}{2}}_j \\
\mtVec{z}^{\mtCK{h}{1}}_j
\end{bmatrix} \in \mtReal{2+2n_o},
\end{align}
where $n_o \equiv n_o(j)$ is the number of observations of the $\mtIth{j}$ point feature at the second keyframe.

The full system measurement vector is composed of the observations of the $n_f$ point features at both keyframes,
\begin{align}
\label{eq:system-measurement}
\mtVec{z} &= \begin{bmatrix} \mtVec{z}_1 \\
\vdots \\
\mtVec{z}_{n_f} \end{bmatrix} \in \mtReal{m}.
\end{align}

\section{Degeneracy Analysis}
\label{section:degeneracies}

\subsection{Solution Degeneracies}
Typical optimization methods attempt to minimize a nonlinear cost function, $c : \mtReal{n} \rightarrow \mtReal{}$, and determine the optimal state vector estimate, $\mtEstVec{x}^{*} \in \mtReal{n}$, such that,
\begin{align}
\mtEstVec{x}^{*} = \mtArgMin{\mtVec{x}} c(\mtVec{x}).
\end{align}
The optimization proceeds iteratively, starting with an initial state estimate, $\mtEstVec{x}_0 \in \mtReal{n}$. Each iteration seeks to update the current state estimate, $\mtEstVec{x}_k$, with a vector $\mtVec{\delta}_k \in \mtReal{n}$,
\begin{align}
\mtEstVec{x}_{k+1} = \mtEstVec{x}_k + \mtVec{\delta}_k,
\end{align}
such that the sequence $\{ \mtEstVec{x}_0, \mtEstVec{x}_{1}, \mtEstVec{x}_{2}, \dots \} \rightarrow \mtEstVec{x}^{*}$. In this analysis, a cost function relating to sum of squared measurement error is assumed,
\begin{align}
c(\mtEstVec{x}_k) = \dfrac{1}{2} \mtTranspose{\mtErrVec{z}_k} \mtErrVec{z}_k,
\end{align}
where $\mtErrVec{z}_k = \mtVec{z} - \mtVec{g}(\mtEstVec{x}_k) \in \mtReal{m}$ is the measurement error vector at iteration $k$. A commonly-used method for BA is the Levenberg-Marquardt (LM) method \cite{Hartley:2003:Mvgcv}, although other optimization methods may be used such as Gauss-Newton, gradient descent, or Newton step. Each of these optimization methods can operate using the sum of squared reprojection error and the parameter update $\mtVec{\delta}_k$ is defined as the solution to,
\begin{align}
\label{eq:39}
\mtMat{N} \mtVec{\delta}_k = 
\mtTranspose{\mtMat{J}} \mtErrVec{z}_k
\end{align}
where $\mtMat{N}$ is the normal matrix, which varies by optimization method, and $\mtMat{J}$ is the measurement Jacobian such that,
\begin{align}
\label{eq:47}
\mtMat{J} = \mtEvaluated{\mtPartial{\mtVec{g}(\mtVec{x})}{\mtVec{x}}}{\mtVec{x} = \mtEstVec{x}_k} \in \mtReal{\mtBy{m}{n}}.
\end{align}
Solving for $\mtVec{\delta}_k$, the solution becomes,
\begin{align}
\label{eq:3-37}
\mtVec{\delta}_k = \mtInverse{\mtMat{N}}
\mtTranspose{\mtMat{J}} \mtErrVec{z}_k,
\end{align}
where $\mtErrVec{z}_k = \mtVec{z} - \mtVec{g}(\mtEstVec{x}_k)$ is the measurement error. A unique $\mtVec{\delta}_k$ can be found as long as the matrix $\mtMat{N}$ is invertible and the Jacobian has full rank. Therefore, the system in \eqref{eq:39} is degenerate when,
\begin{align}
\mtRank{\mtMat{J}} < n,
\end{align}
and the solution is under-constrained. The cases where the system becomes degenerate are the focus of this work and are investigated in the next section.

\subsection{Identification of Degenerate Configurations}
This section identifies the configurations of the camera cluster geometry, relative motion, and target model structure, for which the Jacobian $\mtMat{J}$ falls below full column rank. It is shown that for the assumptions stated previously, the $\mtBy{m}{n}$ measurement Jacobian matrix $\mtMat{J}$ has full rank if and only if a $\mtBy{m_o}{6}$ matrix, $\mtMat{M}_2$ has full rank.

To determine the rank of the Jacobian matrix, the structure of the sub-blocks formed for the point feature observations is investigated. Each point feature $j$ is observed by only one camera at the first keyframe. This observation adds two rows to the Jacobian,
\begin{align}
\mtJobs{j}{h}{1} &= \mtEvaluated{ \mtPartial{\mtVec{g}_j^{\mtCK{h}{1}}(\mtVec{x})}{\mtVec{x}} }{\mtVec{x} = \mtEstVec{x}} \\
&= \begin{bmatrix} 
\mtZeros{2}{n_f+6} & \mtZeros{2}{2(j-1)} & 
\mtEvaluated{ \mtPartial{\mtVec{g}_j^{\mtCK{h}{1}}(\mtVec{x})}{\mtVec{\mu}_j} }{\mtVec{x} = \mtEstVec{x}}
& \mtZeros{2}{2(n_f-j)}
\end{bmatrix},
\end{align}
where the only non-zero elements are in the $\mtBy{2}{2}$ block relating the measurement coordinates to the point feature bearing states,
\begin{align}
\label{eq:52}
\mtEvaluated{ \mtPartial{\mtVec{g}_j^{\mtCK{h}{1}}(\mtVec{x})}{\mtVec{\mu}_j} }{\mtVec{x} = \mtEstVec{x}} = 
\begin{bmatrix}
-\tan(\phi_j)^2 - 1 & 0 \\[10pt]
\dfrac{\tan(\phi_j) \tan(\theta_j)}{\cos(\phi_j)} & \dfrac{1}{\cos(\phi_j) \cos(\theta_j)^2}
\end{bmatrix}.
\end{align}
The non-zero element structure of these rows are shown in Figure~\ref{fig:Jjh1}.
\begin{figure}[!ht]
\centering
\includegraphics[width=0.48\textwidth]{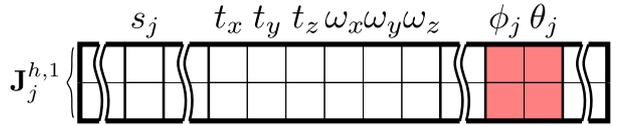}
\caption{Structure of the Jacobian rows for the measurement of the point feature $j$ in its anchor camera at the first keyframe. Non-zero elements are shown as the shaded cells. The columns not related to feature $j$ contain only zeros and have been removed for conciseness.}
\label{fig:Jjh1}
\end{figure}

Each point feature $j$ is also observed and measured by at least one camera at the second keyframe. The Jacobian matrix rows corresponding to the $\mtIth{k}$ observation of point feature $j$ in the second keyframe with camera $i(k)$ are the partial derivatives of the measurement equation \eqref{eq:25} with respect to the system states,
\begin{align}
\mtJobs{j}{i(k)}{2} &= \mtEvaluated{ \mtPartial{\mtVec{g}_j^{\mtCK{i(k)}{2}}(\mtVec{x})}{\mtVec{x}} }{\mtVec{x} = \mtEstVec{x}}, \\
\label{eq:jobs1}
&= \left( \mtEvaluated{ \mtPartial{\mtVec{\pi}_2 \left( \mtHVec{u}_j^{\mtCK{i(k)}{2}} \right)}{\mtHVec{u}_j^{\mtCK{i(k)}{2}}} }{\mtVec{x} = \mtEstVec{x}} \right) \left( \mtEvaluated{ \mtPartial{\mtHVec{u}_j^{\mtCK{i(k)}{2}}}{\mtVec{x}} }{\mtVec{x} = \mtEstVec{x}} \right),
\end{align} 
using the chain rule. Dropping the implied $\mtVec{x} = \mtEstVec{x}$, the first term evaluates to,
\begin{align}
\mtPartial{\mtVec{\pi}_2 \left( \mtHVec{u}_j^{\mtCK{i(k)}{2}} \right)}{\mtHVec{u}_j^{\mtCK{i(k)}{2}}} &= \dfrac{1}{\left( u_z \right)^2} \begin{bmatrix} u_z & 0 & -u_x \\ 0 & u_z & -u_y \end{bmatrix} \\
\label{eq:jobs2}
&= \dfrac{1}{\left( z_j^{\mtCK{i(k)}{2}} \right)^2} 
\begin{bmatrix} 
0 & -1 & 0 \\
1 & 0 & 0
\end{bmatrix}
\mtSkewSym{\mtHVec{u}_j^{\mtCK{i(k)}{2}}},
\end{align}
where $\mtSkewSym{\mtVec{a}}$ is the skew-symmetric matrix such that $\mtSkewSym{\mtVec{a}} \mtVec{b} = \mtCross{\mtVec{a}}{\mtVec{b}}$, $\forall \mtVec{a},\mtVec{b} \in \mtReal{3}$.

Substituting \eqref{eq:jobs2} back into \eqref{eq:jobs1} and recognizing that,
\begin{align}
\mtHVec{u}_j^{\mtCK{i(k)}{2}} = \begin{bmatrix} -1 & 0 & 0 \\ 0 & -1 & 0 \\ 0 & 0 & 1 \end{bmatrix} \mtRct{i(k)} \mtVec{q}_j^{\mtCK{i(k)}{2}},
\end{align}
the Jacobian rows can be written as,
\begin{align}
\mtJobs{j}{i(k)}{2}&= 
\dfrac{1}{\left( z_j^{\mtCK{i(k)}{2}} \right)^2} 
\begin{bmatrix} 
0 & 1 & 0 \\
-1 & 0 & 0
\end{bmatrix}
\mtSkewSym{\mtHVec{u}_j^{\mtCK{i(k)}{2}}}
\mtPartial{\mtHVec{u}_j^{\mtCK{i(k)}{2}}}{\mtVec{x}}, \\
&=\dfrac{1}{\left( z_j^{\mtCK{i(k)}{2}} \right)^2} 
\begin{bmatrix} 
0 & -1 & 0 \\
1 & 0 & 0
\end{bmatrix}
\mtRct{i(k)}
\mtSkewSym{\mtVec{q}_j^{\mtCK{i(k)}{2}}}
\mtPartial{\mtVec{q}_j^{\mtCK{i(k)}{2}}}{\mtVec{x}}, \\
&= \dfrac{1}{\left( z_j^{\mtCK{i(k)}{2}} \right)^2} 
\begin{bmatrix} 
-\mtTranspose{\left( \mtCross{\mtUnitVec{n}_{i(k),y}}{\mtVec{q}_j^{\mtCK{i(k)}{2}}} \right)} \\
\mtTranspose{\left( \mtCross{\mtUnitVec{n}_{i(k),x}}{\mtVec{q}_j^{\mtCK{i(k)}{2}}} \right)}
\end{bmatrix}
\mtPartial{\mtVec{q}_j^{\mtCK{i(k)}{2}}}{\mtVec{x}},
\end{align}
where the partial derivatives of the point feature position at the second keyframe with respect to the system states are written,
\begin{align}
\mtPartial{\mtVec{q}_j^{\mtCK{i(k)}{2}}}{\mtVec{x}} = 
\left[ \begin{array}{ccccc}
\mtZeros{3}{(j-1)} & \mtPartial{\mtVec{q}_j^{\mtCK{i(k)}{2}}}{s_j} & \mtZeros{3}{(n_f - j)} & \mtPartial{\mtVec{q}_j^{\mtCK{i(k)}{2}}}{\mttK} & \mtPartial{\mtVec{q}_j^{\mtCK{i(k)}{2}}}{\mtVec{\omega}_{\mtFrame{K}}}
\end{array} \right.& \nonumber \\
\left. \begin{array}{ccc}
\mtZeros{3}{2(j-1)} & \mtPartial{\mtVec{q}_j^{\mtCK{i(k)}{2}}}{\mtVec{\mu}_j} & \mtZeros{3}{2(n_f-j)}
\end{array} \right]&
\end{align}
with the position change with respect to the radial distance,
\begin{align}
\mtPartial{\mtVec{q}_j^{\mtCK{i(k)}{2}}}{s_j} = \mtUnitVec{a}_{j,2},
\end{align}
the translation between the keyframes,
\begin{align}
\mtPartial{\mtVec{q}_j^{\mtCK{i(k)}{2}}}{\mttK} = \mtIdentity{3},
\end{align}
the rotation between the keyframes,
\begin{align}
\mtPartial{\mtVec{q}_j^{\mtCK{i(k)}{2}}}{\mtVec{\omega}_{\mtFrame{K}}} 
= -\mtSkewSym{s_j \mtUnitVec{a}_{j,2} + \mtVec{c}_{h,2}},
\end{align}
and the initial bearings to the point feature,
\begin{align}
\mtPartial{\mtVec{q}_j^{\mtCK{i(k)}{2}}}{\mtVec{\mu}_j}
= s_j \mtRK \mtRc{h} 
\begin{bmatrix}
  \cos \phi_j \cos \theta_j   & -\sin \phi_j \sin \theta_j\\
  0                           & -\cos \theta_j \\
  -\sin \phi_j \cos \theta_j  & -\cos \phi_j \sin \theta_j
\end{bmatrix}.
\end{align}
The structure of the Jacobian rows associated with the observations of point feature $j$ at the second keyframe is shown in Figure~\ref{fig:Jji2}.

\begin{figure}[!ht]
\centering
\includegraphics[width=0.48\textwidth]{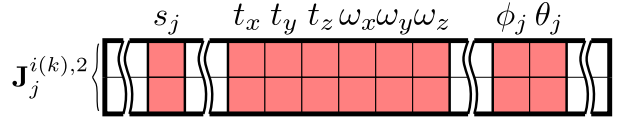}
\caption{Structure of the Jacobian rows for an observation of the point feature $j$ at the second keyframe.}
\label{fig:Jji2}
\end{figure}

The full measurement Jacobian is formed by stacking all of the observations of all of the point features at both keyframes,
\begin{align}
\label{eq:66}
\mtMat{J} = \begin{bmatrix}
\begin{bmatrix}
\mtJobs{1}{i(1,1)}{2} \\
\vdots \\
\mtJobs{1}{i(1,n_o(1))}{2} \\
\mtJobs{1}{h(1)}{1} \\
\end{bmatrix} \\
\vdots \\
\begin{bmatrix}
\mtJobs{n_f}{i(n_f,1)}{2} \\
\vdots \\
\mtJobs{n_f}{i(n_f,n_o(n_f))}{2} \\
\mtJobs{n_f}{h(n_f)}{1} \\
\end{bmatrix}
\end{bmatrix}.
\end{align}

The configurations for which this measurement Jacobian possesses full rank can be identified by checking the rank of a reduced-dimension matrix, as shown in the following Lemma.

\begin{mtLemma}
\label{lemma:1}
For a multicamera cluster SLAM system satisfying Assumptions \ref{assum:1}, \ref{assum:2}, and \ref{assum:3}, the rank of the measurement Jacobian matrix $\mtMat{J}$ in \eqref{eq:66} is full if and only if the rank of the matrix,
\begin{align}
\label{eq:m2}
\mtMat{M}_2 = \begin{bmatrix}
\begin{bmatrix}
-\mtTranspose{\mtUnitVec{a}_{1,2}} \mtSkewSym{\mtVec{v}_{1,1}} &
\mtTranspose{\mtUnitVec{a}_{1,2}} \mtSkewSym{\mtVec{v}_{1,1}} \mtSkewSym{\mtVec{w}_{1}} \\
\vdots & \vdots \\
-\mtTranspose{\mtUnitVec{a}_{1,2}} \mtSkewSym{\mtVec{v}_{1,n_o(1)}} &
\mtTranspose{\mtUnitVec{a}_{1,2}} \mtSkewSym{\mtVec{v}_{1,n_o(1)}} \mtSkewSym{\mtVec{w}_{1}} 
\end{bmatrix} \\
\vdots \\
\begin{bmatrix}
-\mtTranspose{\mtUnitVec{a}_{n_f,2}} \mtSkewSym{\mtVec{v}_{n_f,1}} &
\mtTranspose{\mtUnitVec{a}_{n_f,2}} \mtSkewSym{\mtVec{v}_{n_f,1}} \mtSkewSym{\mtVec{w}_{n_f}} \\
\vdots & \vdots \\
-\mtTranspose{\mtUnitVec{a}_{n_f,2}} \mtSkewSym{\mtVec{v}_{n_f,n_o(n_f)}} &
\mtTranspose{\mtUnitVec{a}_{n_f,2}} \mtSkewSym{\mtVec{v}_{n_f,n_o(n_f)}} \mtSkewSym{\mtVec{w}_{n_f}}
\end{bmatrix}
\end{bmatrix},
\end{align}
where
\begin{align*}
\mtVec{w}_{j} = s_j \mtUnitVec{a}_{j,2} + \mtVec{c}_{h(j),2},
\end{align*}
is full.
\end{mtLemma}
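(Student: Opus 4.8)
The plan is to establish the equivalence by reducing the full-rank question for the $m \times n$ matrix $\mtMat{J}$ of \eqref{eq:66} to that of $\mtMat{M}_2$, eliminating first the $2n_f$ bearing columns and then the $n_f$ radial-distance columns while tracking the rank exactly. I would work through the null space, since $\mtRank{\mtMat{J}} = n$ is equivalent to $\mtMat{J}\mtVec{\nu} = \mtVec{0} \Rightarrow \mtVec{\nu} = \mtVec{0}$, and partition a candidate null vector as $\mtVec{\nu} = \mtTranspose{[\mtTranspose{\mtVec{\nu}_s},\, \mtTranspose{\mtVec{\nu}_m},\, \mtTranspose{\mtVec{\nu}_\mu}]}$ conformally with the state ordering $(\mtVec{x}_1,\mtVec{x}_2,\mtVec{x}_3)$.

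First I would eliminate the bearings. For each feature $j$ the single first-keyframe block $\mtJobs{j}{h}{1}$ is supported only on the two $\mtVec{\mu}_j$ columns, where it equals the $\mtBy{2}{2}$ block in \eqref{eq:52}. Since $\phi_j,\theta_j \in (-\tfrac{\pi}{2},\tfrac{\pi}{2})$, that block is lower triangular with nonzero diagonal entries $-\sec^2\phi_j$ and $(\cos\phi_j \cos^2\theta_j)^{-1}$, hence invertible; stacking over $j$ yields an invertible $\mtBy{2n_f}{2n_f}$ submatrix confined to the bearing columns and zero elsewhere. Hence the first-keyframe block forces $\mtVec{\nu}_\mu = \mtVec{0}$, and equivalently a Schur complement on this invertible block gives $\mtRank{\mtMat{J}} = 2n_f + \mtRank{\mtMat{J}_2}$, where $\mtMat{J}_2 \in \mtReal{\mtBy{2m_o}{(n_f+6)}}$ is the stack of second-keyframe rows restricted to the radial-distance and motion columns. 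It then remains to show $\mtRank{\mtMat{J}_2} = n_f + \mtRank{\mtMat{M}_2}$.

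Next I would expose the structure of one observation. By \eqref{eq:jobs1} the rows of $\mtMat{J}_2$ for observation $(j,k)$ are $\mtMat{P}_{j,k}\,\partial \mtVec{q}_j^{\mtCK{i(k)}{2}}/\partial(\mtVec{x}_1,\mtVec{x}_2)$, where $\mtMat{P}_{j,k}$ has rows $-\mtTranspose{(\mtCross{\mtUnitVec{n}_{i(k),y}}{\mtVec{q}_j^{\mtCK{i(k)}{2}}})}$ and $\mtTranspose{(\mtCross{\mtUnitVec{n}_{i(k),x}}{\mtVec{q}_j^{\mtCK{i(k)}{2}}})}$, and the derivative has columns $\mtUnitVec{a}_{j,2}$ (for $s_j$), $\mtIdentity{3}$ (for $\mttK$), and $-\mtSkewSym{\mtVec{w}_j}$ (for $\mtVec{\omega}_{\mtFrame{K}}$). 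Using Assumption \ref{assum:3}, so that $\mtVec{q}_j^{\mtCK{i(k)}{2}}$ has positive component along $\mtUnitVec{n}_{i(k),z}$ and hence is not in the $\mtUnitVec{n}_{i(k),x}$–$\mtUnitVec{n}_{i(k),y}$ plane, both rows are nonzero and independent, so $\mtMat{P}_{j,k}$ has rank $2$ with row space exactly $\{\mtVec{q}_j^{\mtCK{i(k)}{2}}\}^{\perp}$. The decisive point is that $\mtCross{\mtUnitVec{a}_{j,2}}{\mtVec{v}_{j,k}}$ lies in this row space: since $\mtVec{q}_j^{\mtCK{i(k)}{2}} = s_j \mtUnitVec{a}_{j,2} + \mtVec{v}_{j,k}$, it is orthogonal to both $\mtUnitVec{a}_{j,2}$ and $\mtVec{v}_{j,k}$, hence to $\mtVec{q}_j^{\mtCK{i(k)}{2}}$. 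Thus there is a combination $\mtTranspose{\mtVec{\gamma}_{j,k}}$ with $\mtTranspose{\mtVec{\gamma}_{j,k}}\mtMat{P}_{j,k} = -\mtTranspose{\mtUnitVec{a}_{j,2}}\mtSkewSym{\mtVec{v}_{j,k}}$; because $(\mtCross{\mtUnitVec{a}_{j,2}}{\mtVec{v}_{j,k}})\cdot\mtUnitVec{a}_{j,2}=0$ it annihilates the $s_j$ column, and on the motion columns it reproduces exactly the $(j,k)$ row $[\,-\mtTranspose{\mtUnitVec{a}_{j,2}}\mtSkewSym{\mtVec{v}_{j,k}}\;\;\mtTranspose{\mtUnitVec{a}_{j,2}}\mtSkewSym{\mtVec{v}_{j,k}}\mtSkewSym{\mtVec{w}_j}\,]$ of $\mtMat{M}_2$ in \eqref{eq:m2}. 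So every row of $\mtMat{M}_2$ is a scale-free row combination of $\mtMat{J}_2$.

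Finally I would settle the rank bookkeeping. The $s_j$ column of $\mtMat{J}_2$ is supported only on feature $j$'s rows, and by Assumption \ref{assum:2} some observing camera has $\mtVec{v}_{j,k}\not\parallel\mtUnitVec{a}_{j,2}$, so $\mtMat{P}_{j,k}\mtUnitVec{a}_{j,2}\neq\mtVec{0}$ and that column is nonzero; as the $n_f$ radial-distance columns have disjoint supports they contribute rank exactly $n_f$, independent of the scale-free rows. Conversely, any scale-free combination of feature $j$'s rows, described through covectors $\mtTranspose{\mtVec{\xi}}\in\{\mtVec{q}_j^{\mtCK{i(k)}{2}}\}^{\perp}$, has motion part of the form $[\,\mtTranspose{\mtVec{\eta}}\;\;-\mtTranspose{\mtVec{\eta}}\mtSkewSym{\mtVec{w}_j}\,]$ with $\mtVec{\eta}\perp\mtUnitVec{a}_{j,2}$, and the $\mtMat{M}_2$ rows of feature $j$ already span this set. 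Combining, $\mtRank{\mtMat{J}_2}=n_f+\mtRank{\mtMat{M}_2}$, so with $n=3n_f+6$ the matrix $\mtMat{J}$ has full rank if and only if $\mtRank{\mtMat{M}_2}=6$. I expect the main obstacle to be exactly this last step: proving that the elimination of the radial-distance columns is \emph{exact}, i.e.\ that no second-keyframe combination surviving in the motion columns after the scale directions are removed contributes rank beyond that recorded by $\mtMat{M}_2$. This is where Assumptions \ref{assum:2} and \ref{assum:3} must be used with care, through the rank-$2$ property of $\mtMat{P}_{j,k}$ and the collapse of the scale-free motion covectors to the single parameter $\mtVec{\eta}\perp\mtUnitVec{a}_{j,2}$.
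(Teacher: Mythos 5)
Your proposal follows essentially the same route as the paper's proof: eliminate the bearing columns using the invertible $\mtBy{2}{2}$ first-keyframe blocks of \eqref{eq:52}, exhibit each row of $\mtMat{M}_2$ as a combination of the two rows of an observation via the covector $\mtCross{\mtVec{q}_j^{\mtCK{i(k)}{2}}}{\mtUnitVec{a}_{j,2}} = \mtCross{\mtVec{v}_{j,k}}{\mtUnitVec{a}_{j,2}} \in \{\mtVec{q}_j^{\mtCK{i(k)}{2}}\}^{\perp}$, and account for the $n_f$ disjointly-supported radial-distance columns separately; you phrase this in Schur-complement and row-space language where the paper uses explicit elementary-operation matrices $\mtMat{O}^{i(k),2}_j$, $\mtMat{K}_j$, $\mtMat{L}_j$ and the block-diagonal $\mtMat{M}$. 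Up through ``every row of $\mtMat{M}_2$ is a scale-free row combination of $\mtMat{J}_2$'' your argument is correct and in places more careful than the paper's, and it delivers the ``if'' direction cleanly: $\mtRank{\mtMat{M}_2}=6$ forces $\mtRank{\mtMat{J}_2} = n_f + 6$ and hence $\mtRank{\mtMat{J}} = n$.

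However, the ``only if'' direction rests on your unproven assertion that ``the $\mtMat{M}_2$ rows of feature $j$ already span this set,'' and as stated this is false when $n_o(j) \geq 2$. Since the block $\bigl[\, \mtUnitVec{a}_{j,2} \mid \mtIdentity{3} \mid -\mtSkewSym{\mtVec{w}_j} \,\bigr]$ is the \emph{same} for every observation of feature $j$, the scale-free combinations of feature $j$'s rows realize the motion covector $[\,\mtTranspose{\mtVec{\eta}}\;\; -\mtTranspose{\mtVec{\eta}}\mtSkewSym{\mtVec{w}_j}\,]$ for every $\mtVec{\eta} \in \bigl( \sum_{k} \{\mtVec{q}_j^{\mtCK{i(k)}{2}}\}^{\perp} \bigr) \cap \{\mtUnitVec{a}_{j,2}\}^{\perp}$, whereas the $\mtMat{M}_2$ rows span only $\sum_k \bigl( \{\mtVec{q}_j^{\mtCK{i(k)}{2}}\}^{\perp} \cap \{\mtUnitVec{a}_{j,2}\}^{\perp} \bigr) = \text{span}\{ \mtCross{\mtVec{v}_{j,k}}{\mtUnitVec{a}_{j,2}} \}$. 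These coincide for $n_o(j)=1$, but the former can be strictly larger: take $\mtUnitVec{a}_{j,2} = \mtVec{e}_3$, $s_j = 1$, $\mtVec{v}_{j,1} = \mtVec{e}_1$, $\mtVec{v}_{j,2} = -\mtVec{e}_1$. Then $\mtVec{q}_1 = \mtVec{e}_3 + \mtVec{e}_1$ and $\mtVec{q}_2 = \mtVec{e}_3 - \mtVec{e}_1$ are independent, so the sum of their orthogonal complements is all of $\mtReal{3}$ and the achievable $\mtVec{\eta}$ fill the two-dimensional plane $\{\mtVec{e}_3\}^{\perp}$ (explicitly, $\mtVec{\xi}_1 = \tfrac{1}{2}(\mtVec{e}_1 - \mtVec{e}_3) \perp \mtVec{q}_1$ and $\mtVec{\xi}_2 = \tfrac{1}{2}(\mtVec{e}_1 + \mtVec{e}_3) \perp \mtVec{q}_2$ sum to $\mtVec{\eta} = \mtVec{e}_1$), while $\mtCross{\mtVec{v}_{j,1}}{\mtUnitVec{a}_{j,2}} = -\mtVec{e}_2 = -\mtCross{\mtVec{v}_{j,2}}{\mtUnitVec{a}_{j,2}}$ spans only one dimension. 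In such coplanar cross-camera configurations $\mtRank{\mtMat{J}_2} > n_f + \mtRank{\mtMat{M}_2}$, so your claimed equality fails and full rank of $\mtMat{M}_2$ is sufficient but not shown necessary. You should know that the paper's own proof glosses over exactly this point with ``It can be shown that the odd-numbered rows of $\mtMat{K}_j$ may always be written as a linear combination of the resulting even-numbered rows'' --- so you have reproduced the paper's argument including its unproven kernel, and you correctly flagged it as the main obstacle; to close it you must either restrict to $n_o(j) = 1$ for all $j$ (where your covector argument is exact) or prove the spanning claim under an explicit condition excluding observation vectors $\mtVec{v}_{j,k}$ that are coplanar with $\mtUnitVec{a}_{j,2}$ without being mutually parallel.
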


\begin{proof}
The strategy is to first show that the columns of $\mtMat{J}$ corresponding to the point feature positions ($s_j$, $\mtVec{\mu}_j$) have full rank. Consequently, the only way for the Jacobian to have less than full rank is when the columns corresponding to the keyframe motion ($\mttK$, $\mtRK$) have rank less than six.

By Assumption \ref{assum:3}, the position of the $\mtIth{j}$ point feature in its anchor camera and keyframe ensures that $\cos(\phi_j) > 0$ and $\cos(\theta_j) > 0$. As a result, the block \eqref{eq:52} always has a rank of 2 since the determinant is non-zero,
\begin{align}
\mtDet{\mtEvaluated{ \mtPartial{\mtVec{g}_j^{\mtCK{h}{1}}(\mtVec{x})}{\mtVec{\mu}_j} }{\mtVec{x} = \mtEstVec{x}}} = 
\dfrac{-1}{\cos(\phi_j)^3 \cos(\theta_j)^2} \neq 0.
\end{align}
Therefore, it is possible to diagonalize the sub-block using elementary row and column operations without changing the rank of the matrix. After diagonalization, the new matrix rows, $\mtMat{K}^{\mtCK{h}{1}}_j$, have the structure shown in Figure~\ref{fig:Jjh1_diag}. As a result, the columns corresponding to the bearing states $\mtVec{\mu}_j$ have full rank for all of the point features.

\begin{figure}[!ht]
\centering
\includegraphics[width=0.48\textwidth]{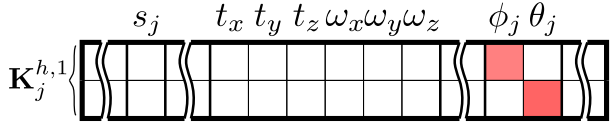}
\caption{Structure of the Jacobian rows for the measurement of the point feature $j$ in the first keyframe after diagonalizing the bearing sub-block.}
\label{fig:Jjh1_diag}
\end{figure}

Additionally, the columns associated with the point feature radial depth parameter $s_j$ for an observation at the second keyframe contain only zeros when,
\begin{align}
\begin{bmatrix}
-\mtTranspose{\mtUnitVec{n}_{i(k),y}} \\
 \mtTranspose{\mtUnitVec{n}_{i(k),x}}
\end{bmatrix}
\mtSkewSym{\mtVec{v}_{j,k}} \mtUnitVec{a}_{j,2} = \mtZeros{2}{1},
\end{align}
the displacement between the anchor and observing camera coordinate frames is collinear with the initial bearing to the point feature in the anchor camera frame. In this case, there is no information about the depth of the feature within this measurement since the triangulation baseline has zero length. However, by Assumptions \ref{assum:2} and \ref{assum:3}, there exists an observation $k \in \{ 1,\dots,n_o(j)\}$ such that,
\begin{align}
\begin{bmatrix}
s_x \\ s_y
\end{bmatrix} = 
\begin{bmatrix}
-\mtTranspose{\mtUnitVec{n}_{i(k),y}} \\
 \mtTranspose{\mtUnitVec{n}_{i(k),x}}
\end{bmatrix}
\mtSkewSym{\mtVec{v}_{j,k}} \mtUnitVec{a}_{j,2} \neq \mtZeros{2}{1},
\end{align}
and therefore, at least one non-zero element in the column. The matrix rows $\mtJobs{j}{i(k)}{2}$ are manipulated using the row operations matrix,
\begin{align}
\mtMat{O}^{i(k),2}_j = \begin{cases}
\begin{bmatrix}
 \mtTranspose{\mtUnitVec{n}_{i(k),x}} \mtSkewSym{\mtVec{q}_j^{\mtCK{i(k)}{2}}} \mtUnitVec{a}_{j,2} & 0 \\
 -\mtTranspose{\mtUnitVec{n}_{i(k),x}} \mtSkewSym{\mtVec{q}_j^{\mtCK{i(k)}{2}}} \mtUnitVec{a}_{j,2}  &  -\mtTranspose{\mtUnitVec{n}_{i(k),y}} \mtSkewSym{\mtVec{q}_j^{\mtCK{i(k)}{2}}} \mtUnitVec{a}_{j,2} \end{bmatrix}, & \text{if } s_x,s_y \neq 0 \\
\begin{bmatrix}
0 & 1 \\
 \mtTranspose{\mtUnitVec{n}_{i(k),x}} \mtSkewSym{\mtVec{q}_j^{\mtCK{i(k)}{2}}} \mtUnitVec{a}_{j,2}  & 0 \end{bmatrix}, & \text{if } s_x = 0, s_y \neq 0 \\
\begin{bmatrix}
1 & 0 \\
0 &  -\mtTranspose{\mtUnitVec{n}_{i(k),y}} \mtSkewSym{\mtVec{q}_j^{\mtCK{i(k)}{2}}} \mtUnitVec{a}_{j,2} 
\end{bmatrix}, & \text{if } s_x \neq 0, s_y=0 \\
\mtIdentity{2}, & \text{if } s_x,s_y = 0,
\end{cases}
\end{align}
in order to achieve the desired structure $\mtMat{K}^{i(k),2}_j$ for the Jacobian rows,
\begin{align}
\mtMat{K}^{i(k),2}_j = \mtVar{z}^{\mtCK{i(k)}{2}}_j \mtMat{O}^{i(k),2}_j \mtJobs{j}{i(k)}{2}.
\end{align}
which is shown in Figure~\ref{fig:Kji2}. 

\begin{figure}[!ht] 
\centering
\includegraphics[width=0.48\textwidth]{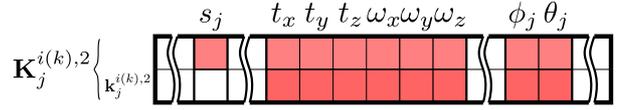}
\caption{Element structure of the modified Jacobian rows associated with measuring point feature $j$ at the second keyframe.}
\label{fig:Kji2}
\end{figure}

The second row of the matrix $\mtMat{K}^{i(k),2}_j$, labelled $\mtVec{k}^{i(k),2}_j$, becomes,
\begin{align}
\mtVec{k}^{i(k),2}_j &= \begin{bmatrix}
\mtVec{0} & k_{s_j} & \mtVec{0} & \mtVec{k}_{\mttK} & & \mtVec{k}_{\mtVec{\omega}_K} & \mtVec{0} & \mtVec{k}_{\mtVec{\mu}_j} & \mtVec{0}
\end{bmatrix} \\
 &= \dfrac{1}{\mtVar{z}^{\mtCK{i(k)}{2}}_j} \left( \mtTranspose{\mtUnitVec{n}_{i(k),z}} \mtVec{q}_j^{\mtCK{i(k)}{2}} \right) \mtTranspose{ \left( \mtCross{\mtVec{q}_j^{\mtCK{i(k)}{2}}}{\mtUnitVec{a}_{j,2}} \right) }
\mtPartial{\mtVec{q}_j^{\mtCK{i(k)}{2}}}{\mtVec{x}} \\
&= \mtTranspose{ \left( \mtSkewSym{ \mtVec{b}_2 + \mtVec{c}_{h,2} + \mtVec{d}_{i} } \mtUnitVec{a}_{j,2} \right) } \mtPartial{\mtVec{q}_j^{\mtCK{i(k)}{2}}}{\mtVec{x}}.
\end{align}
where the element associated with the radial distance parameter is zero,
\begin{align}
k_{s_j} &= \mtTranspose{ \left( \mtSkewSym{ \mtVec{b}_2 + \mtVec{c}_{h,2} + \mtVec{d}_{i} } \mtUnitVec{a}_{j,2} \right) } \mtUnitVec{a}_{j,2} \\
&= \mtTranspose{ \left( \mtVec{b}_2 + \mtVec{c}_{h,2} + \mtVec{d}_{i} \right) } \mtSkewSym{ \mtUnitVec{a}_{j,2} } \mtUnitVec{a}_{j,2} \\
&= 0,
\end{align}
and the columns for the keyframe motion states are now,
\begin{align}
\mtVec{k}_{\mttK} &=  - \mtTranspose{\mtUnitVec{a}_{j,2}}  \mtSkewSym{ \mtVec{b}_2 + \mtVec{c}_{h,2} + \mtVec{d}_{i(k)} } \\
&= - \mtTranspose{\mtUnitVec{a}_{j,2}} \mtSkewSym{\mtVec{v}_{j,k}},
\end{align}
and,
\begin{align}
\mtVec{k}_{\mtVec{\omega}_K} &=   \mtTranspose{\mtUnitVec{a}_{j,2}} \mtSkewSym{ \mtVec{b}_2 + \mtVec{c}_{h,2} + \mtVec{d}_{i(k)} } \mtSkewSym{ s_j \mtUnitVec{a}_{j,2} + \mtVec{c}_{h,2} } \\
&= \mtTranspose{\mtUnitVec{a}_{j,2}} \mtSkewSym{\mtVec{v}_{j,k}} \mtSkewSym{\mtVec{w}_{j}}.
\end{align}

All of the modified Jacobian matrix rows for the point feature $j$ observations at both keyframes are then compiled into a single block,
\begin{align}
\mtMat{K}_{j} &= 
\begin{bmatrix}
\mtMat{K}_{j}^{i(1),2} \\
\vdots \\
\mtMat{K}_{j}^{i(n_o),2} \\
\mtMat{K}_{j}^{h,1}
\end{bmatrix}
\end{align}
which maintains the same rank as the original Jacobian block for the point feature,
\begin{align}
\mtMat{J}_j = \mtEvaluated{ \mtPartial{\mtVec{g}_j(\mtVec{x})}{\mtVec{x}} }{\mtVec{x} = \mtEstVec{x}},
\end{align}
since the manipulations are performed by full rank elementary row and column operations matrices. The resulting matrix block has the structure shown in Figure~\ref{fig:Kj}.
\begin{figure}[!ht]
\centering
\includegraphics[width=0.48\textwidth]{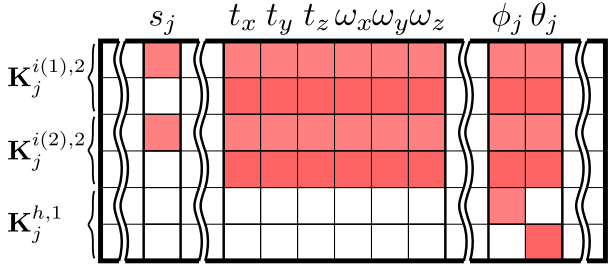}
\caption{Structure of the manipulated Jacobian block for point feature $j$ stacking all of the observations at both keyframes.}
\label{fig:Kj}
\end{figure}

It can be shown that the odd-numbered rows of $\mtMat{K}_j$ may always be written as a linear combination of the resulting even-numbered rows. Additionally, elementary row operations can eliminate all but the last elements in each of the columns associated with the point feature bearing states, $\phi_j$ and $\theta_j$. Finally, the non-zero element in the $s_j$ column can be used to eliminate the remaining non-zero elements in the first row. Subsequently, this new matrix, $\mtMat{L}_j$ has the same rank as the original Jacobian block $\mtMat{J}_j$ for this point feature and the structure is shown in Figure~\ref{fig:Lj}.
\begin{figure}[!ht]
\centering
\includegraphics[width=0.48\textwidth]{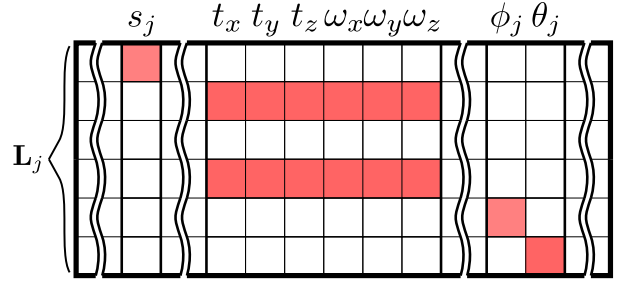}
\caption{Structure of the Jacobian block $\mtMat{L}_j$ for the point feature $j$ observations, resulting from manipulations to the original Jacobian.}
\label{fig:Lj}
\end{figure}

The matrix $\mtMat{M}$ is formed by stacking and reordering all of the $\mtMat{L}_j$ matrices for $j=1 \dots n_f$ and has the same rank as the original Jacobian $\mtMat{J}$. The structure of matrix $\mtMat{M}$ is shown in Figure~\ref{fig:M}.
\begin{figure}[!ht]
\centering
\includegraphics[width=0.48\textwidth]{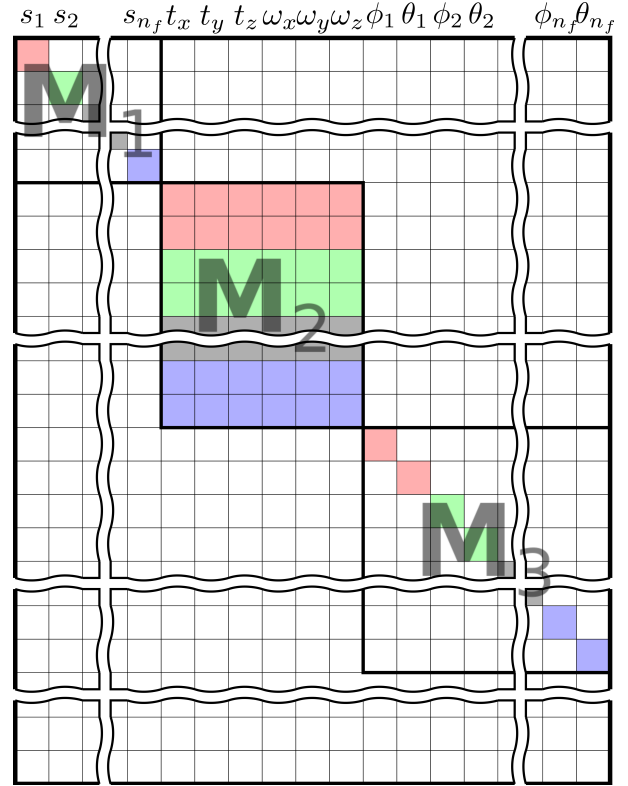}
\caption{Structure of the matrix $\mtMat{M}$, resulting from stacking and reordering the rows of the matrices $\mtMat{L}_j$. The sub-blocks of $\mtMat{M}$ are shown and all must be full rank for $\mtMat{M}$, and therefore $\mtMat{J}$, to be full rank.}
\label{fig:M}
\end{figure}

The matrix $\mtMat{M}$ is a block-diagonal matrix composed of three sub-matrices:
\begin{itemize}
\item $\mtMat{M}_1 \in \mtReal{\mtBy{n_f}{n_f}}$ is diagonal,
\item $\mtMat{M}_2 \in \mtReal{\mtBy{m_o}{6}}$,
\item $\mtMat{M}_3 \in \mtReal{\mtBy{2n_f}{2n_f}}$ is diagonal.
\end{itemize}
As a result, $\mtMat{M}$ and $\mtJacobian$ are full rank if and only if all of the following are satisfied,
\begin{itemize}
\item $\mtRank{\mtMat{M}_1} = n_f$,
\item $\mtRank{\mtMat{M}_2} = 6$, and
\item $\mtRank{\mtMat{M}_3} = 2n_f$.
\end{itemize}
It is clear that both $\mtMat{M}_1$ and $\mtMat{M}_3$ are full rank by construction, and therefore, $\mtMat{M}$ and by extension $\mtJacobian$ are full rank if and only if $\mtMat{M}_2$ is full rank, which concludes the proof.
\end{proof}

Therefore, determining if the measurement Jacobian $\mtJacobian$ is full rank is simplified to checking the rank of the reduced-dimension matrix $\mtMat{M}_2$. It follows directly from Lemma \ref{lemma:1}, that the degeneracy of the multicamera cluster SLAM system can be determined by checking the rank of $\mtMat{M}_2$.
\begin{mtCorollary}
For a multicamera cluster SLAM system satisfying Assumptions \ref{assum:1}, \ref{assum:2}, and \ref{assum:3}, the solution is degenerate and under-constrained, if and only if $\mtRank{\mtMat{M}_2} < 6$.
\end{mtCorollary}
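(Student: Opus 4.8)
The plan is to derive this statement as an immediate logical consequence of Lemma \ref{lemma:1} together with the degeneracy criterion established at the start of Section \ref{section:degeneracies}. First I would recall that the system was defined to be degenerate and under-constrained precisely when the measurement Jacobian fails to attain full column rank, that is, when $\mtRank{\mtJacobian} < n$ with $n = 6 + 3n_f$; a unique parameter update $\mtVec{\delta}_k$ exists only when $\mtJacobian$ has full rank and the normal matrix $\mtMat{N}$ is invertible.

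Next I would invoke Lemma \ref{lemma:1}, which supplies the biconditional that $\mtJacobian$ has full rank if and only if $\mtMat{M}_2$ has full rank. Since $\mtMat{M}_2 \in \mtReal{\mtBy{m_o}{6}}$ has exactly six columns, its full rank is the condition $\mtRank{\mtMat{M}_2} = 6$. Negating both sides of the biconditional then yields the equivalence that $\mtRank{\mtJacobian} < n$ holds if and only if $\mtRank{\mtMat{M}_2} < 6$.

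Chaining these two equivalences gives the result directly: the solution is degenerate and under-constrained, i.e.\ $\mtRank{\mtJacobian} < n$, exactly when $\mtRank{\mtMat{M}_2} < 6$. The one point requiring attention is that full column rank of $\mtMat{M}_2$ presupposes $m_o \geq 6$; Assumption \ref{assum:2}, guaranteeing at least one observation per feature whose motion is non-collinear with the initial bearing, is what supplies sufficiently many informative rows for this rank to be attainable. Because the statement is a direct corollary, there is no genuine obstacle beyond correctly matching the notion of full rank for the tall matrix $\mtMat{M}_2$ to its six-column structure, and verifying that the earlier definition of degeneracy is precisely the negation of the full-rank condition appearing in Lemma \ref{lemma:1}.
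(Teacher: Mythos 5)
Your proposal is correct and matches the paper exactly: the paper gives no separate proof of the corollary, treating it (as you do) as the immediate chaining of the degeneracy criterion $\mtRank{\mtMat{J}} < n$ from Section \ref{section:degeneracies} with the biconditional of Lemma \ref{lemma:1}, using the fact that $\mtMat{M}_2$ has six columns. Your closing aside slightly overstates Assumption \ref{assum:2} (it guarantees only $m_o \geq n_f$, not $m_o \geq 6$), but this does not affect the equivalence, since $m_o < 6$ simply forces $\mtRank{\mtMat{M}_2} < 6$ and hence degeneracy, consistent with the corollary.
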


\subsubsection{Rank of $\mtMat{M}_2$}
The matrix $\mtMat{M}_2$ is a dense $\mtBy{m_o}{6}$ block with a single row for each observation of the point features at the second keyframe. Each row of $\mtMat{M}_2$ in \eqref{eq:m2} specifies the six Pl\"{u}cker coordinates for a line in $\mtReal{3}$ since each set of coordinates satisfies the Grassmann-Pl\"{u}cker relation \cite{White:1994:grassmann},
\begin{align}
&\mtDot{\left( \mtSkewSym{\mtVec{v}_{j,k}} \mtUnitVec{a}_{j,2} \right)}{\left(  \mtSkewSym{ \mtVec{w}_j } \mtSkewSym{\mtVec{v}_{j,k}} \mtUnitVec{a}_{j,2}  \right)} \\
&= \mtDot{ - \mtVec{w}_j }{ \underbrace{ \left( \mtSkewSym{\mtSkewSym{\mtVec{v}_{j,i(k)}} \mtUnitVec{a}_{j,2}} \left( \mtSkewSym{\mtVec{v}_{j,i(k)}} \mtUnitVec{a}_{j,2} \right) \right)}_{= \ \mtZeros{3}{1}} } \\
&= 0. 
\end{align}
The matrix $\mtMat{M}_2$ will not have full rank when the $m_o$ sets of coordinates are linearly-dependent. This is similar to the problem of identifying motion singularities for series-parallel mechanisms. However, the current problem is more complex since the common connection points which sometimes allow for the simplification of the singularity condition in mechanisms are not present in the cluster SLAM system.

\subsection{Sufficient Conditions for Degeneracy}
\label{section:sufficient}
In this section, the structure of the matrix $\mtMat{M}_2$ from \eqref{eq:m2} will be exploited to identify cluster configurations and motions that are sufficient for degeneracy of the solution, independent of the number of point features observed and their constellation geometry.

It is immediately apparent that the system must include six point feature observations at the second keyframe for the matrix $\mtMat{M}_2$ to possibly have full rank. The first three columns in $\mtMat{M}_2$ are a stack of cross products involving the camera observation vectors and the bearings to the point features. When they all have a common collinear vector operand, the resulting row vectors are all coplanar, with the normal defined by the collinear vector operand. 

As expected, the system will be degenerate if the cluster consists of only one component camera since the rows will all have a common camera observation vector, consistent with how monocular vision systems are unable to recover the six degrees of freedom motion solution in a SLAM system. Additionally, the system is degenerate when only one point feature is observed by six cameras at the second keyframe since all of the matrix rows will contain the common point feature unit vector, $\mtUnitVec{a}_{1,2}$, in the cross product term. 

When the camera observation vectors are all parallel, the SLAM solution is degenerate. Each camera observation vector can be written as a scalar multiple, $\exists \gamma_{m,n} \in \mtReal{}$ such that $\mtVec{v}_{m,n} = \gamma_{m,n} \mtVec{v} \in \mtReal{3}$. In this case, the matrix $\mtMat{M}_2$ will have less than full rank since,
\begin{align}
\mtRank{
\begin{bmatrix}
-\gamma_{1,1} \mtTranspose{\mtUnitVec{a}_{1,2}} \mtSkewSym{\mtVec{v}} \\
\vdots \\
-\gamma_{n_f,n_o(n_f)} \mtTranspose{\mtUnitVec{a}_{n_f,2}} \mtSkewSym{\mtVec{v}}
\end{bmatrix}} \leq 2 < 3,
\end{align}
and $\mtRank{\mtMat{M}_2} < 6$. This condition includes the previously known two-camera cluster concentric circle degeneracies, since the motion causes the camera centres to move in parallel. Adding more cameras to the cluster reduces the configurations for which the relative motion will lead to the camera observation vectors being parallel. Additionally, when point features are observed across different cameras within the cluster at the two keyframes, it becomes less likely that all of the camera observation vectors are parallel. However, there do exist certain combinations of cluster motions for which the camera observation vectors remain parallel regardless of the feature point locations, and the camera cluster system becomes degenerate. Some example configurations are presented in Figure~\ref{fig:deg-ex}.

\begin{figure*}[!ht]
\centering
\includegraphics[width=\textwidth]{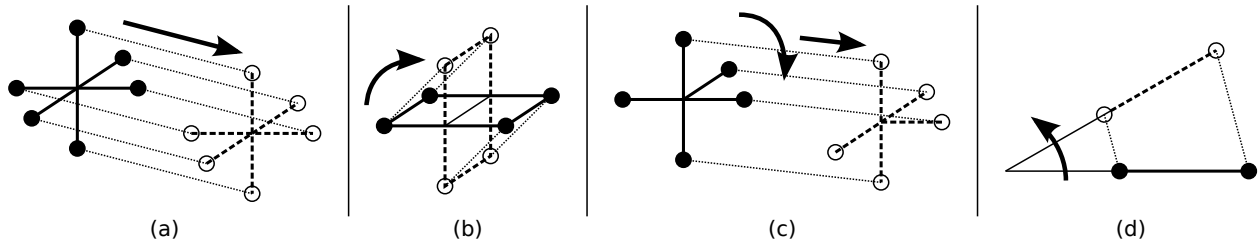}
\caption{Examples of camera cluster motions sufficient for degeneracy. The black dots are the cameras at the first keyframe connected by solid lines, white dots are the cameras at the second keyframe connected by dashed lines. The dotted lines are the camera observation vectors which are all parallel. Motions include (a) pure translation with no intercamera correspondence, (b) rotation axis in the plane of planar four-camera cluster, (c) 90 degrees rotation with translation, (d) two-camera concentric circles motion.}
\label{fig:deg-ex}
\end{figure*}

When the relative motion of the camera cluster is such that a point feature which was observed in one camera at the first keyframe is observed by a different camera at the second keyframe, there is a camera observation vector between the positions of the two cameras when they observed the particular point feature. This can create a set of camera observation vectors which are non-parallel even when the relative motion is a pure translation. As a result, it is possible that the system is non-degenerate and the solution can be found in this case, depending on the rank of the matrix $\mtMat{M}_2$. Observing common point features over multiple cameras is an effective way of avoiding the set of camera observation vectors becoming parallel and reducing the set of sufficient motions for system degeneracy.

\subsection{Necessary and Sufficient Conditions for Degeneracy}

In the case when the motion does not produce parallel camera observation vectors, it is necessary to evaluate the rank of the matrix $\mtMat{M}_2$ before concluding that the system is non-degenerate. The matrix $\mtMat{M}_2$ can be regarded as a set of motion constraints on a mechanism where a non-full rank means that the framework is not rigid and the configuration can change without violating the constraints. The full analysis of these singular configurations is beyond the scope of this work, but this section presents a set of example configurations for some general case systems to numerically demonstrate the effect of adding additional cameras and point feature observations on the degenerate configuration set.

Figures \ref{fig:cam2points6} and \ref{fig:cam3points6} show typical surface meshes for the relative cluster translation, $\mttK$, leading to $\mtMat{M}_2$ losing rank for example two and three-camera cluster systems observing six point features with no overlap in the camera observations and non-zero relative rotation. The surface is computed numerically as the locus of zero determinant for the matrix $\mtMat{M}_2$ using a set of randomly-positioned point features. A similar surface is generated for any non-zero rotation $\mtRK$.

\begin{figure}[t!]
\centering
\begin{subfigure}{0.48\textwidth}
  \centering
  \includegraphics[width=\textwidth]{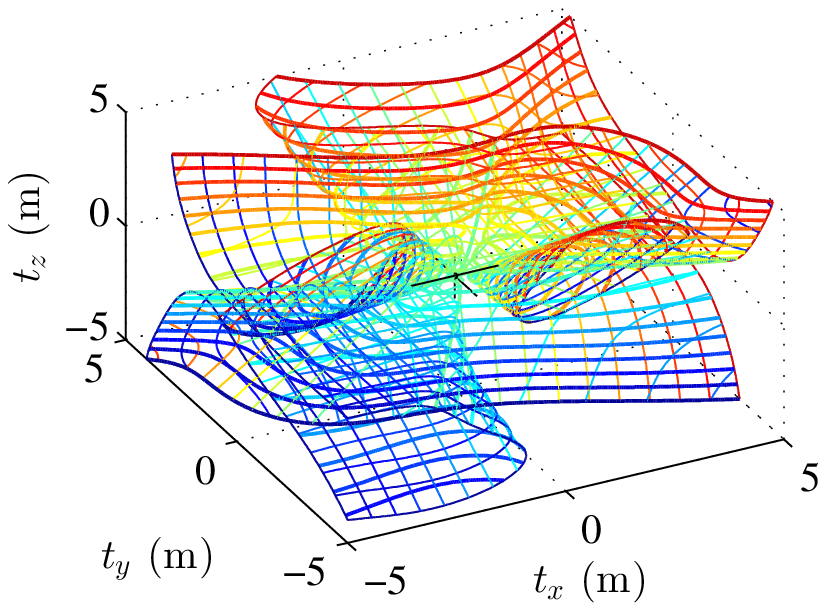}
  \caption{Two cameras} 
  \label{fig:cam2points6}
\end{subfigure}
\begin{subfigure}{0.45\textwidth}
  \centering
  \includegraphics[width=\textwidth]{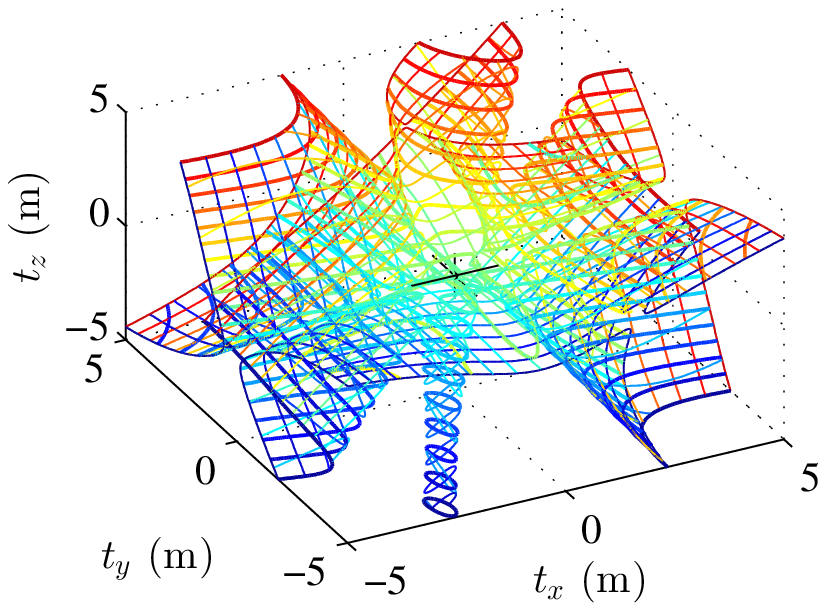} 
  \caption{Three cameras} 
  \label{fig:cam3points6}
\end{subfigure}
\caption{Degenerate $\mttK$ for (a) two and (b) three-camera clusters, observing six points with no overlap and non-zero rotation.}
\label{fig:campoints6}
\end{figure}

As more point feature observations are added to the system, the number of degeneracies is reduced. With more than six point feature observations, the size of the matrix $\mtMat{M}_2$ becomes $\mtBy{m_o}{6}$, where $m_o > 6$ and therefore the rank of the matrix cannot be checked by computing the determinant directly. Instead, for the matrix to not be full rank, all of the $m_o$ choose 6 submatrices of size $\mtBy{6}{6}$ formed by the rows of $\mtMat{M}_2$ must have a zero determinant. Each of the submatrices generates a surface as in Figure \ref{fig:campoints6}, and therefore, $\mtMat{M}_2$ has deficient rank at the $\mttK$ where all of the surfaces intersect. 

A two-dimensional cross-section at $\mtVar{t}_z = 1 \text{ m}$ is shown in Figure \ref{fig:cam3points78} for the three-camera case observing 7 and 8 point features with no overlap and non-zero rotation. The degenerate cluster translations correspond to the points on the graph where all of the curves intersect and are marked as black circles. These intersections are determined numerically using the computed loci for the determinants of the submatrices. If all of the loci intersect with each other within a certain epsilon ball, the location is selected as a degeneracy of $\mtMat{M}_2$.

\begin{figure}
\centering
\begin{subfigure}{0.45\textwidth}
  \centering
  \includegraphics[width=\textwidth]{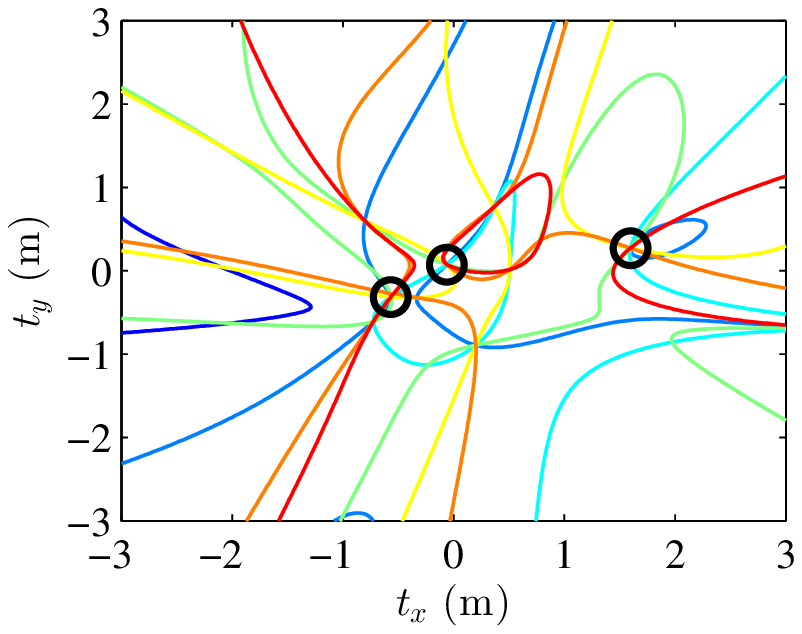} 
  \caption{7 points}
  \label{fig:cam3points7}
\end{subfigure}
\begin{subfigure}{0.45\textwidth}
  \centering
  \includegraphics[width=\textwidth]{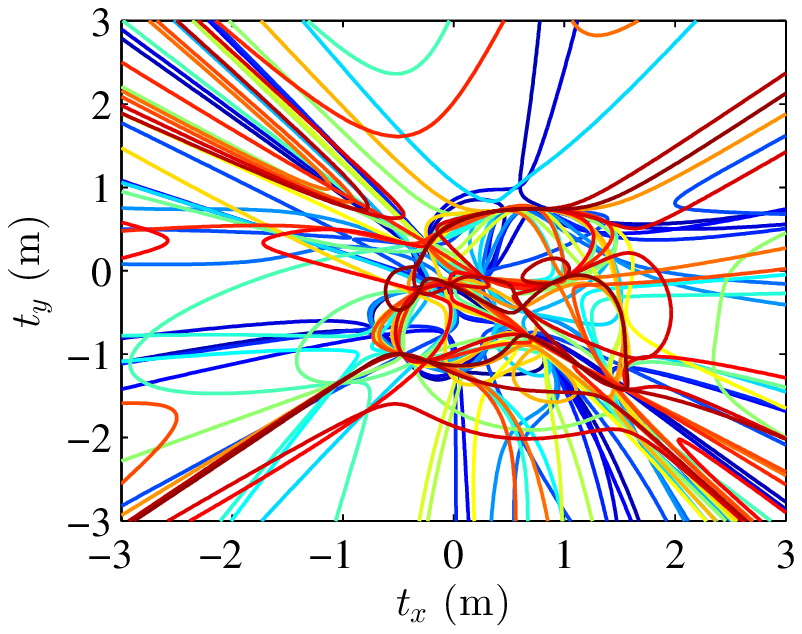} 
  \caption{8 points}
  \label{fig:cam3points8}
\end{subfigure}
\caption{The loci of zero determinants for the $m_o$ choose 6 submatrices of $\mtMat{M}_2$ at $\mtVar{t}_z = 1 \text{ m}$ in the three-camera cluster case for (a) $m_o=7$ and (b) $m_o=8$ point features. The degenerate points are the intersections marked with black circles.} 
\label{fig:cam3points78}  
\end{figure}

Notice that the number of degenerate motions is reduced from the curves of each colour for a subset of six feature observations, to a small finite set of points at any given cross-section. While the indicated degenerate positions are subject to numerical precision considerations, these examples are more informative in demonstrating that the system is non-degenerate in almost all configurations.

It is observed that the degenerate points in Figure \ref{fig:cam3points7} connect as lines in $\mtReal{3}$ at different slices of $\mtVar{t}_z$. When observing eight points in general position, Figure \ref{fig:cam3points8} shows that there are no translations for which the system is degenerate. While not exhaustive, numerical analysis of the singular configurations of $\mtMat{M}_2$ shows that the set of degenerate motions in the cluster system has been reduced from the previous surface with $m_o=6$, to a set of lines with $m_o=7$ and the empty set for $m_o=8$.

Adding point feature observations to the system is an effective way to reduce the set of degenerate motions for the camera cluster system; however, the sufficient conditions in Section \ref{section:sufficient} remain no matter how many point features are observed. Examples of these degeneracies are shown in Figure \ref{fig:cam2points8} for a two-camera cluster and Figure \ref{fig:cam3points8_inplane} for a three-camera cluster with rotation purely in the camera centre plane \cite{Tribou:2013:srmcsnfov}.

\begin{figure}
\centering
\begin{subfigure}{0.45\textwidth}
  \centering
  \includegraphics[width=\textwidth]{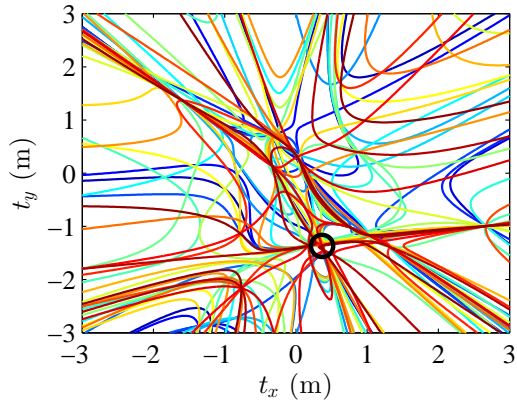} 
  \caption{Two cameras}
  \label{fig:cam2points8}
\end{subfigure}
\begin{subfigure}{0.45\textwidth}
  \centering
  \includegraphics[width=\textwidth]{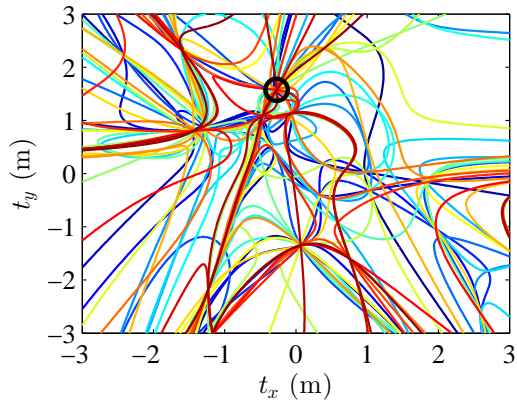} 
  \caption{Three cameras}
  \label{fig:cam3points8_inplane}
\end{subfigure}
\caption{The loci of zero determinants for the 8 choose 6 submatrices of $\mtMat{M}_2$ at $\mtVar{t}_z = 1 \text{ m}$ in the (a) two-camera and (b) three-camera cluster case with rotation axis within the camera centre plane observing eight point features} 
\label{fig:cam23points8}  
\end{figure}

The indicated degeneracy corresponds to the motion causing the camera observation vectors to move in parallel and extends along a line in $\mtReal{3}$. In order to eliminate these motions, it is necessary to add more cameras to the cluster, observe point features over different cameras, or both, to ensure that not all of the camera observation vectors are parallel.

Multiple cameras observing the same point feature at the different keyframes adds extra camera observation vectors which are less likely to be parallel with the rest of the vector set and produce a full-rank $\mtMat{M}_2$. An example two-camera cluster observing eight point features is shown in Figure \ref{fig:cam2points8_cross}. Camera 2 observes a feature at the second keyframe that was measured by camera 1 initially, and camera 1 observes a feature from camera 2. Significantly, there is no relative rotation between the keyframes of this system, but the rank of $\mtMat{M}_2$ is full nearly everywhere. This is an improvement on the previous completely non-overlapping cluster configurations which required non-zero rotation to have a full-rank $\mtMat{M}_2$.

\begin{figure}
  \centering
  \includegraphics[width=0.45\textwidth]{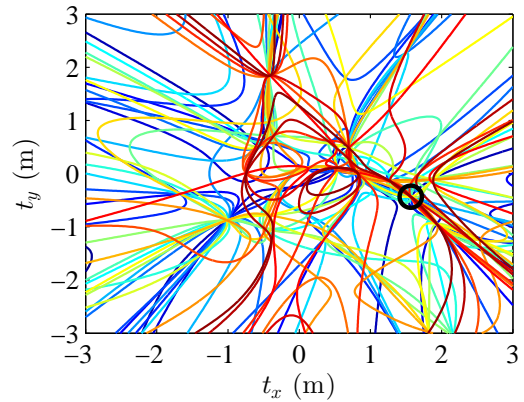} 
  \caption{The loci of zero determinants for the 8 choose 6 submatrices of $\mtMat{M}_2$ at $\mtVar{t}_z = 1 \text{ m}$ in the two-camera cluster system with zero rotation between keyframes, but two common features across cameras.}
  \label{fig:cam2points8_cross}
\end{figure}

The example systems in this section demonstrate numerically the effect of adding cameras and point feature observations to the camera cluster SLAM solution. For any algorithm, the overall strategy should be to reduce the number of degenerate configurations by increasing the number of point features observed on the target, and then eliminating the remaining sufficient conditions for degeneracy by adding cameras to the cluster, or observing point features across cameras, such that it is impossible for all of the camera observation vectors to be parallel through the motion. This will ensure a well-constrained solution to the localization and mapping problem.


\section{Conclusions}
\label{section:conclusions}

This work presented a detailed analysis of the degenerate configurations of the calibrated non-overlapping FOV multicamera cluster SLAM problem for an optimization based on minimizing a least-squares cost function with respect to the image-plane reprojection error. The system is reduced to a simple matrix rank test on a matrix consisting of rows of Pl\"{u}cker coordinates for lines in $\mtReal{3}$. Sufficient configurations for solution degeneracy caused by the relative motion were identified for $n_c$-camera clusters observing any number of point features over two keyframes. This leads to the novel general conclusion that if all of the camera observation vectors, formed as the displacement between the pairs of cameras observing a particular point feature, are parallel in a common coordinate frame, then the system is degenerate. It is further shown for several example systems that with the addition of more cameras to the cluster, more point feature observations, and observations of the point features across different cameras, the set of degenerate configurations is significantly reduced as it becomes impossible for all of the identified vectors to be parallel and the redundant observations prevent all of the determinants of the submatrices from going to zero concurrently.

Future work will focus on fully characterizing the necessary and sufficient conditions for the system to become degenerate, including the degeneracies related to the geometry of the point feature constellation from the standpoint of geometric algebra techniques \cite{Dorst:2010:gacsooag}. Additionally, the results from this work will be used to generate metrics for deciding when and where to add keyframes in a real-time SLAM system to accurately construct and constrain the generated target model and avoid the degeneracies within the state space caused by measurements from this type of sensor.

\section*{Acknowledgements}

This work was partially funded by the National Sciences and Engineering Research Council of Canada (NSERC) under Grant No.~CRDPJ 397768-10. Partial funding also comes from the NSERC through the Alexander Graham Bell Canada Graduate Scholarship - Doctoral (CGS-D) award.


\appendix

\section{Projective Geometry}
\label{section:projective}
The projective space, $\mtProjective{n}$, consists of the real vector space $\mtReal{n}$, with the addition of points at infinity \cite{Hartley:2003:Mvgcv}. Only a very brief description of the projective space is presented here and the reader is referred to \cite{Hartley:2003:Mvgcv} for a more thorough introduction.

A point in the projective space is represented by the $n+1$ homogeneous coordinates,
\begin{align}
\mtHVec{x} = \mtTranspose{ \begin{bmatrix} \mtHVar{x}_1 & \mtHVar{x}_2 & \dots & \mtHVar{x}_{n+1} \end{bmatrix} } \in \mtProjective{n}.
\end{align}
The points at infinity in $\mtReal{n}$ are represented by those with coordinate $x_{n+1} = 0$. For finite points in $\mtReal{n}$ -- when $x_{n+1} \neq 0$ -- the coordinates of the corresponding point $\mtVec{x} \in \mtReal{n}$ are determined by,
\begin{align}
\mtVec{x} &= \mtTranspose{ \begin{bmatrix} x_1 & x_2 & \dots & x_{n} \end{bmatrix} } \\
&= \mtTranspose{ \begin{bmatrix} \dfrac{\mtHVar{x}_1}{\mtHVar{x}_{n+1}} & \dfrac{\mtHVar{x}_2}{\mtHVar{x}_{n+1}} & \dots & \dfrac{\mtHVar{x}_n}{\mtHVar{x}_{n+1}} \end{bmatrix} }.
\end{align}
Note that there is no way of mapping a point at infinity back to $\mtReal{n}$ since it would require division by zero. 

Each point along a ray in the projective space maps to the same point in the real vector space. As a result, the points $\mtHVec{x}$ and $\lambda \mtHVec{x}$, for $\lambda \in \mtReal{}$, map to the same point $\mtVec{x} \in \mtReal{n}$. Not surprisingly, there is an extra degree of freedom in the projective vectors using $n+1$ coordinates to represent a $n$-dimensional space. Finally, it is possible to represent any point $\mtVec{x} \in \mtReal{n}$ in the corresponding projective space $\mtProjective{n}$ simply by augmenting the coordinates,
\begin{align}
\mtHVec{x} = \mtTranspose{ \begin{bmatrix} \mtTranspose{ \mtVec{x} } & 1 \end{bmatrix} }.
\end{align}

The projective spaces allow for projective and coordinate transformations to be represented as linear matrix operations.



\section*{References}
\bibliographystyle{elsarticle-num}
\bibliography{references}


\end{document}